\renewcommand*{\backref}[1]{}
\renewcommand*{\backrefalt}[4]{%
    \ifcase #1%
          \or [Cited on p.~#2.]%
          \else [Cited on p.~#2.]%
    \fi%
    }
\crefname{figure}{Fig.}{Figs.}
\crefname{example}{Example}{Examples}
\crefname{equation}{Eq.}{Eqs.}
\crefname{definition}{Defn.}{Defns.}
\crefname{corollary}{Corollary}{Corollaries}
\crefname{proposition}{Prop.}{Props.}
\crefname{theorem}{Thm.}{Thms.}
\crefname{remark}{Remark}{Remarks}
\crefname{principle}{Principle}{Principles}
\crefname{lemma}{Lemma}{Lemmas}
\crefname{claim}{Claim}{Claims}
\crefname{table}{Table}{Tabs.}
\crefname{assumption}{Assumption}{Assumptions}
\crefname{algorithm}{Alg.}{Algs.}
\setlist[itemize]{leftmargin=*,itemsep=0.25em}
\setlist[enumerate]{leftmargin=*,itemsep=0.25em}
\newcommand*{\boldcheckmark}{%
  \textpdfrender{
    TextRenderingMode=FillStroke,
    LineWidth=.5pt, %
  }{\ding{51}}%
}
\newcommand{\cmark}{\textcolor{red!40!green!100}{\boldcheckmark}} %
\newcommand{\xmark}{\textcolor{red}{\ding{55}}} %
\def\mathcolor#1#{\@mathcolor{#1}}
\def\@mathcolor#1#2#3{%
  \protect\leavevmode
  \begingroup
    \color#1{#2}#3%
  \endgroup
}
\newtheoremstyle{slplain}%
  {.4\baselineskip\@plus.1\baselineskip\@minus.1\baselineskip}%
  {.3\baselineskip\@plus.1\baselineskip\@minus.1\baselineskip}%
  {\itshape}%
  {}%
  {\bfseries}%
  {.}%
  { }%
  {}%
\theoremstyle{slplain} %
\newtheorem*{definition*}{Definition}
\newtheorem*{theorem*}{Theorem}
\newtheorem{theorem}{Theorem}[section]
\theoremstyle{definition}
\newtheorem{example}[theorem]{Example}
\newtheorem{remark}[theorem]{Remark}
\theoremstyle{plain} %
\numberwithin{equation}{section}
\newtheoremstyle{etplain}%
  {.0\baselineskip\@plus.1\baselineskip\@minus.1\baselineskip}%
  {.0\baselineskip\@plus.1\baselineskip\@minus.1\baselineskip}%
  {\itshape}%
  {}%
  {\bfseries}%
  {.}%
  { }%
  {}%
\newcommand{\norm}[1]{\left\lVert#1\right\rVert}
\renewcommand\bar\overline
\newcommand{\bm}{\mathbf}
\newcolumntype{C}[1]{>{\centering\let\newline\\\arraybackslash\hspace{0pt}}m{#1}}
\newcommand{\red}[1]{{\color{red}#1}}
\newcommand{\white}[1]{{\color{white}#1}}
\definecolor{posGreen}{rgb}{0,0.6,0}
\definecolor{negRed}{rgb}{0.83,0.17,0.16}
\definecolor{mypurple}{rgb}{0.6,0.196,1}
\definecolor{myorange}{rgb}{1,0.5,0}
\definecolor{myblue}{rgb}{0,0.4,0.8}
\DeclareMathOperator{\var}{Var}
\newcommand{\abs}[1]{\ensuremath{{\left\vert #1 \right\vert}}}
\newcommand{\calL}{\ensuremath{\mathcal{L}}}
\newcommand{\calN}{\ensuremath{\mathcal{N}}}
\newcommand{\calT}{\ensuremath{\mathcal{T}}}
\newcommand{\calX}{\ensuremath{\mathcal{X}}}
\newcommand{\calZ}{\ensuremath{\mathcal{Z}}}
\newcommand{\bB}{\ensuremath{\bm{B}}}
\newcommand{\bZ}{\ensuremath{\bm{Z}}}
\newcommand{\ba}{\ensuremath{\bm{a}}}
\newcommand{\bc}{\ensuremath{\bm{c}}}
\newcommand{\bh}{\ensuremath{\bm{h}}}
\newcommand{\bs}{\ensuremath{\bm{s}}}
\newcommand{\bt}{\ensuremath{\bm{t}}}
\newcommand{\bu}{\ensuremath{\bm{u}}}
\newcommand{\bx}{\ensuremath{\bm{x}}}
\newcommand{\bz}{\ensuremath{\bm{z}}}
\newcommand{\bbE}{\ensuremath{\mathbb{E}}}
\newcommand{\bbR}{\ensuremath{\mathbb{R}}}
\newcommand{\bxt}{\ensuremath{\bm{{\tilde{x}}}}}
\newcommand{\xb}{\bx}
\newcommand{\tb}{\bt}
\newcommand{\zb}{\bz}
\newcommand{\xbt}{\Tilde{\bx}}
\newcommand{\sbt}{\Tilde{\bs}}
\newcommand{\zbt}{\Tilde{\bz}}
\def\nd/{\textsuperscript{nd}}
\def\rd/{\textsuperscript{rd}}
\def\th/{\textsuperscript{th}}
\newcommand{\Xcal}{\calX}
\newcommand{\Zcal}{\calZ}
\newcommand{\Tcal}{\calT}
\newcommand{\cb}{\bc}
\renewcommand{\sb}{\bs}
\def\nnil{\nil}
\newcounter{prob}
\newcounter{dual}
\newenvironment{prob*}{%
	\csname equation*\endcsname%
	\aligned%
}{%
	\endaligned%
	\csname endequation*\endcsname%
}
\newcommand{\changelinkcolor}[1]{\hypersetup{linkcolor=#1}}  
\newtcolorbox{examplebox}{colback=posGreen!2, 
 colframe=ForestGreen,
 width=\textwidth,
 sharpish corners,
 top=0pt, %
 bottom=0pt,
 left=1pt,
 right=1pt
}
\newcommand{\hlg}[1]{{\color{gray} #1}}
\newcommand{\hlp}[1]{{\color{mypurple} #1}}
\newcommand{\hlo}[1]{{\color{myorange} #1}}
\newcommand{\hlb}[1]{{\color{myblue} #1}}
\newcommand{\jvk}[1]{{\color{brown}[Julius: #1]}}
\newcommand{\figbottomspace}{0em}
\title{Self-Supervised Disentanglement by Leveraging Structure in Data Augmentations}
\author{%
    \textbf{Cian Eastwood}\thanks{Part of this work was completed while CE was an intern at Meta AI (FAIR), New York.}\, $^{1,2,3}$
  \quad \textbf{Julius von K\"ugelgen}$^{1,4}$
  \quad \textbf{Linus Ericsson}$^{2}$\vspace{-2mm} \\ 
  \textbf{Diane Bouchacourt}$^{3}$
  \quad \textbf{Pascal Vincent}$^{3}$
  \quad \textbf{Bernhard Sch\"olkopf}$^{\, 1}$
  \quad \textbf{Mark Ibrahim}$^{3}$
  \vspace{5mm}\\ 
 $^{1}$ Max Planck Institute for Intelligent Systems, T\"ubingen \\
 $^{2}$ University of Edinburgh \quad
 $^{3}$ Meta AI \quad
 $^{4}$ University of Cambridge
}
\begin{document}

\doparttoc%
\faketableofcontents%

\maketitle

\begin{abstract}
Self-supervised representation learning often uses
data augmentations
to induce some invariance to ``style'' attributes of the data.
However, with downstream tasks generally unknown at training time,
it is difficult to deduce \textit{a priori} which attributes of the data are indeed ``style'' and can be safely discarded. 
To deal with this, current approaches try to retain some style information by 
tuning the degree of %
invariance to some particular task, such as ImageNet object classification.
However, prior work has shown that such task-specific tuning can lead to significant performance degradation on other tasks that rely on the discarded style.
To address this, we introduce a more principled approach that seeks to \textit{disentangle} style features rather than discard them. The key idea is to add
multiple style embedding spaces where: (i) each is invariant to \textit{all-but-one} augmentation; and (ii) \textit{joint} entropy is maximized.
We formalize our structured data-augmentation procedure from a causal latent-variable-model perspective, and prove identifiability of both content \textit{and} individual style variables.
\looseness-1 We empirically demonstrate the benefits of our approach on both synthetic and real-world data.
\end{abstract}
\section{Introduction}
\label{sec:intro}
\looseness-1 Learning useful representations from 
unlabelled data is widely recognized as an important step towards more capable machine-learning systems~\citep{bengio2013representation}. In recent years, \textit{self-supervised learning} (SSL) has made significant progress towards this goal, approaching the performance of supervised methods on many downstream tasks~\citep{ericsson2021well}. The main idea 
is to leverage known data structures to construct proxy tasks or objectives that act as a form of (self-)supervision. This could involve predicting one part of an observation from another~\citep{brown2020language}, or, as we focus on in this work, leveraging \textbf{data augmentations/transformations} to perturb different attributes of the data.
Most current approaches are based on the joint-embedding framework and use data augmentations as weak supervision to determine what information to retain (termed ``content'') and what information to discard (termed ``style'')~\citep{bromley1994signature,chen2020simple,zbontar2021barlow,bardes2022vicreg}. In particular, they do so by optimizing for representation similarity or \textbf{invariance} across transformations of the same observation, %
subject to some form of \textbf{entropy} regularization, with this invariance-entropy trade-off tuned for some particular task (e.g., ImageNet object classification).

\looseness-100 However, at pre-training time, it is unclear what information should be discarded, as \textbf{one task's style may be another's content}.
\citet{ericsson2021well} illustrated this point,
finding ImageNet object-classification accuracy (the task optimized for in pre-training) to be poorly correlated with downstream object-detection and dense-prediction tasks,
concluding that \textit{``universal pre-training is still unsolved''}.
\begin{examplebox}
\begin{example}[\textcolor{myorange}{Color} and \textcolor{myblue}{Rotation}]
\label{ex:color_rotation}
    Suppose we want to make use of \textcolor{myorange}{color} and \textcolor{myblue}{rotation} transformations.
    While some invariance to (or discarding of) an image's color and orientation features can be \textit{beneficial} for ImageNet object classification~\citep{chen2020simple}, it can also be \textit{detrimental} for other tasks like segmentation or fine-grained species classification~\citep{cole2022does}.
\end{example}
\end{examplebox}
\begin{figure}[t]
\vspace{-5mm}
    \centering
    \includegraphics[width=\textwidth]{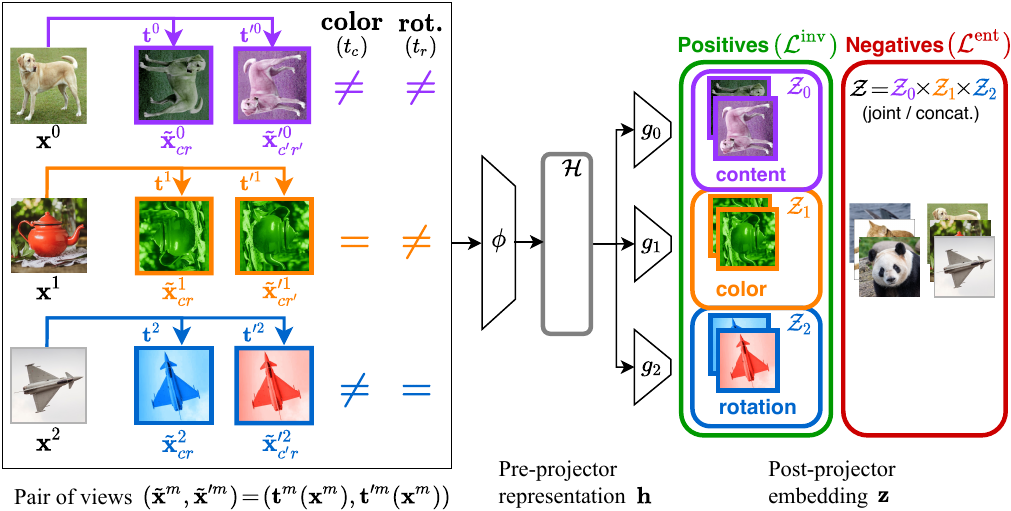}
    \caption{%
    \textbf{Framework overview.} Given $M$ atomic transformations like \textcolor{myorange}{color distortion} or \textcolor{myblue}{rotation} (here, $M\! =\! 2$), 
    we learn a ``content'' embedding space (\textcolor{mypurple}{$\calZ_0$}) that is invariant to \textit{all} transformations and $M$ ``style'' embedding spaces (\textcolor{myorange}{$\calZ_1$}, \textcolor{myblue}{$\calZ_2$}) that are each invariant to \textit{all-but-the-$m^{\text{th}}$} atomic transformation. To do so,
    we construct $M\! +\! 1$ transformation pairs
    $(\bt^m, \bt'^m)$
    \textit{sharing different transformation parameters} and use these to create $M{+}1$ transformed image pairs
    $(\bxt^m, \bxt'^m)$ 
    \textit{sharing different features}. After routing each pair to a different space, we: (i) enforce \textcolor{posGreen}{\textit{invariance within}} each space; and (ii) maximize \textcolor{negRed}{\textit{entropy across}} the joint spaces. The result is $M\! +\! 1$ \textit{disentangled} embedding spaces.
    }\label{fig:overview}
\vspace{\figbottomspace}
\end{figure}
To address this and  learn more universal representations,
we introduce a new
SSL framework
which uses data augmentations to \textbf{disentangle style attributes of the data rather than discard them}. As illustrated in \Cref{fig:overview}, we leverage $M$ transformations to
learn $M{+}1$ \textit{disentangled} embedding spaces capturing both content and style information---with one style space per (group of) transformation(s).

\paragraph{Structure and contributions.} The rest of this paper is organized as follows. We first provide some background on common SSL objectives in \Cref{sec:backgr}, and discuss how they seek to discard style information. In \Cref{sec:framework}, we describe our framework for using data augmentations to disentangle style information, rather than discard it. Next, in \Cref{app:theory}, we analyse our framework through the lens of causal representation learning, proving identifiability of both content \textit{and} style variables. In our experiments of \Cref{sec:exps}, we first use synthetic data to show how content can be \textit{fully} separated from style, and then use ImageNet to show how, by retaining more style information, our framework can improve performance on downstream tasks. We end with a discussion of related work in \Cref{sec:related} before concluding in \Cref{sec:conclusion}. Our main contributions can be summarized as follows:
\begin{itemize}\vspace{-2mm}
    \item \textbf{Algorithmic:} We propose a self-supervised framework for using data augmentations to disentangle style information, rather than discard it.
    \item \textbf{Theoretical:} We formalize our framework from a causal latent-variable-model perspective and prove identifiability of both content \textit{and} individual style variables.
    \item \textbf{Experimental:} We show: (1) how to fully and reliably separate content from style; and (2) that our framework improves performance on downstream tasks by retaining more style information.
\end{itemize}

\section{Background: Using \textit{unstructured} data augmentations to discard}\label{sec:backgr}
Joint-embedding methods are often categorized as contrastive or non-contrastive; while both employ some invariance criterion $\calL^{\text{inv}}$ to encourage the same embedding
across different views of the same image (e.g., cosine similarity or mean squared error), they differ in how they regularize this invariance criterion to avoid collapsed or trivial solutions.
In particular, contrastive methods~\citep{chen2020simple, chen2020improved, chen2021intriguing, he2020momentum} do so by pushing apart the embeddings of different images, while non-contrastive methods do so by architectural design~\citep{grill2020byol, chen2021exploring} or by regularizing the covariance of embeddings~\citep{zbontar2021barlow, bardes2022vicreg, ermolov2021whitening}. We focus on \textit{contrastive} and \textit{covariance-based non-contrastive} methods which
can both %
be expressed as a combination of \textcolor{posGreen}{\textbf{invariance} $\calL^{\text{inv}}$} and \textcolor{negRed}{\textbf{entropy} %
$\calL^{\text{ent}}$} terms~\citep{garrido2023on},
\begin{equation}
\textstyle
\label{eq:ssl-base}
    \calL^{\text{SSL}} = \mathcolor{posGreen}{\calL^{\text{inv}}} + \mathcolor{negRed}{\calL^{\text{ent}}}\,.
\end{equation}
Note these terms have also been called alignment and uniformity~\citep{wang2020understanding}, respectively. 
For concreteness, \Cref{tab:ssl-terms} specifies $\mathcolor{posGreen}{\calL^{\text{inv}}}$ and $\mathcolor{negRed}{\calL^{\text{ent}}}$ for some common SSL methods, namely SimCLR~\citep{chen2020simple}, BarlowTwins~(BTs, \citealt{zbontar2021barlow}), and VICReg~\citep{bardes2022vicreg}.

\begin{table}[t]
\centering
\caption{\textbf{Common SSL Objectives via Invariance and Entropy.} Many SSL methods can be expressed as a (weighted) combination of \textcolor{posGreen}{\textbf{invariance} $\calL^{\text{inv}}$} and \textcolor{negRed}{\textbf{entropy} $\calL^{\text{ent}}$} terms. 
Notation: ${\bZ = [\bz^1, \bz^2, \dots, \bz^n]}$ and $\bZ' = [\bz'^1, \bz'^2, \dots, \bz'^n]$ are batches of $n$ vectors of $d$-dimensional representations with $\bZ, \bZ' \in \bbR^{n \times d}$; $\bZ_{\cdot j} \in \bbR^n$ is a vector composed of the values at dimension $j$ for all $n$ vectors in $\bZ$; $\operatorname{Cov}(\bZ) = 1 / (n - 1) \sum_i (\bz^i - \bar{\bz})(\bz^i - \bar{\bz})^\top$ is the \textit{sample} covariance matrix of $\bZ$ with $\bar{\bz} = 1/n \sum_{i=1}^n \bz^i$; 
$\lambda_v$ and $\lambda_c$ are hyperparameters for weighting the variance and covariance terms, respectively; and $\epsilon>0$ is a small
scalar preventing numerical instabilities.
 }
 \label{tab:ssl-terms}
\resizebox{\textwidth}{!}{
\begin{tabular}{lcc}
\toprule
\textbf{Algorithm}      & \textcolor{posGreen}{$\calL^{\text{inv}}$}$(\bZ, \bZ')$       & \textcolor{negRed}{$\calL^{\text{ent}}$}$(\bZ, \bZ')$ \\ \midrule
SimCLR      & $-\frac{1}{n} \sum_{i=1}^n \frac{(\bz^i)^\top \bz'^i}{\norm{\bz^i} \norm{\bz'^i}} $       &  $\frac{1}{n} \sum_{i=1}^n \log \sum_{j=1, \neq i}^n \exp\left( \frac{(\bz^i)^\top \bz'^j}{\norm{\bz^i} \norm{\bz'^j}} \right)  $         \\[10pt]
BTs & $\sum_{j=1}^d \left (1 - \frac{(\bZ_{\cdot j})^\top \bZ'_{\cdot j}}{\norm{\bZ_{\cdot j}} \norm{\bZ'_{\cdot j}}} \right)^2 $  & $\sum_{j=1}^d \sum_{k=1,\neq j}^d \left (\frac{(\bZ_{\cdot j})^\top \bZ'_{\cdot k}}{\norm{\bZ_{\cdot j}} \norm{\bZ'_{\cdot k}}} \right)^2 $           \\[14pt]
VICReg      &      $-\frac{1}{n} \sum_{i=1}^n \norm{\bz^i - \bz'^{i}}_2^2 $      &   ${\scriptstyle \frac{\lambda_v}{d} \left( \sum_{j=1}^{d} \max \left (0, 1 - \sqrt{\var(\bZ_{\cdot j}) + \epsilon} \right) + \max \left(0, 1 - \sqrt{\var(\bZ'_{\cdot j}) + \epsilon} \right) \right) +}$\\
& & $ {\scriptstyle \frac{\lambda_c}{d} \left( \sum_{i=1}^n \sum_{j=1, \neq i}^n [\operatorname{Cov}(\bZ)]^2_{i, j} + [\operatorname{Cov}(\bZ')]^2_{i, j} \right)  }$        \\ \bottomrule
\end{tabular}
}
\end{table}

In general, the joint-embedding framework involves an unlabelled dataset of observations or images $\bx$ and $M$ transformation distributions $\calT_1, \dots, \calT_M$ from which to sample $M$ atomic transformations $t_1, \dots, t_M$, with $t_m \sim \calT_m$, composed together to form
$\bt = t_1 \circ \dots \circ t_M$. Critically, \textit{each atomic transformation $t_m$ is designed to perturb a different ``style'' attribute of the data} deemed nuisance for the task at hand.
Returning to \Cref{ex:color_rotation}, this could mean sampling parameters for a \textcolor{myorange}{color distortion $t_c \sim \calT_c$} and \textcolor{myblue}{rotation $t_r \sim \calT_r$}, and then composing them as $\bt = \textcolor{myorange}{t_c} \circ \textcolor{myblue}{t_r}$.
For brevity, this sample-and-compose operation is often written as $\bt \sim \calT$.
For each image $\bx$, a pair of transformations $\bt,\bt' \sim p_\bt$ is sampled and applied to form a pair of views $(\bxt,\bxt')=(\bt(\bx),\bt'(\bx))$. The views are then passed through a shared backbone network $\phi$ to form a pair of representations $(\bh, \bh')$, with $\bh = \phi(\bxt)$, and then through a smaller network or ``projector''
$g$ to form a pair of embeddings $(\bz, \bz')$, with $\bz = g(\bh) = g(\phi(\bxt)) \in \calZ$.
Critically, the single embedding space $\calZ$ seeks invariance to all transformations, thereby \textit{discarding each of the ``style'' attributes}. 

\section{Our Framework: Using \textit{structured} data augmentations to disentangle}
\label{sec:framework}
We now describe our framework for using data augmentations to \textit{disentangle} style attributes of the data, rather than discard them---see~\cref{fig:overview} for an illustration. Given $M$ transformations, we learn $M{+}1$ embedding spaces $ %
\{\calZ_m\}_{m=0}^M$ capturing both content ($\calZ_0$) and style ($\{\calZ_m\}_{m=1}^M$) information---with one style space per 
(group of)
atomic transformation(s).  
\subsection{Views}
We start by constructing $M + 1$ transformation pairs $\{(\bt^m,\bt'^m)\}_{m=0}^M$ which \textit{share different transformation parameters}. For $m\! = \!0$, we independently sample two transformations $\bt^0, \bt'^0 \sim \calT$, which will generally\footnote{Assuming continuous transformations, as formalized in~\Cref{app:theory}.}
not share any transformation parameters (i.e., $t^0_k\neq t^{'0}_k$ for all  $k$).
For $1 \leq m \leq M$, we also independently sample two transformations $\bt^m, \bt'^m \sim \calT$, but then enforce that \textbf{the parameters of the $m^{\text{th}}$ transformation are shared} by setting $t'^m_m:=t^m_m$.
Finally, we apply each of these transformation pairs to a different
image %
to form a pair of views $(\bxt^m,\bxt'^m)=(\bt^m(\bx^m),\bt'^m(\bx^m))$.

\begin{examplebox}
\textbf{\Cref{ex:color_rotation} (continued).}
Suppose we can sample parameters for two transformations: \textcolor{myorange}{color distortion $t_c \sim \calT_c$} and \textcolor{myblue}{rotation $t_r \sim \calT_r$}.
As depicted in \cref{fig:overview}, we can then construct three transformation pairs \textit{sharing different parameters}: $\textcolor{mypurple}{(\bt^0,\bt'^0) = (t^0_c \circ t^0_r, t'^0_c \circ t'^0_r)}$ with \textcolor{mypurple}{no shared parameters}; $\textcolor{myorange}{(\bt^1,\bt'^1)= (\hlg{t^1_c} \circ t^1_r,\hlg{t^1_c} \circ t'^1_r)}$ with \textcolor{myorange}{shared color parameters} \hlg{$t^1_c$}; and $\textcolor{myblue}{(\bt^2,\bt'^2) = (t^2_c \circ \hlg{t^2_r},t'^2_c \circ \hlg{t^2_r})}$ with \textcolor{myblue}{shared rotation parameters}~$\hlg{t^2_r}$. 
Applying each transformation pair to a different image, we get three pairs of views: $\textcolor{mypurple}{(\Tilde{\bx}^0_{cr\white{'}\!},\Tilde{\bx}^{0}_{c'r'})
}$ for which only \textcolor{mypurple}{``content'' information is shared}  as both color and rotation differ;
$\textcolor{myorange}{(\Tilde{\bx}^1_{cr\white{'}\!},\Tilde{\bx}^{1}_{cr'})
}$ for which ``content'' and \textcolor{myorange}{color information is shared}, but rotation differs; and
$\textcolor{myblue}{(\Tilde{\bx}^2_{cr\white{'}\!},\Tilde{\bx}^{2}_{c'r})
}$ for which ``content'' and \textcolor{myblue}{rotation information is shared}, but color differs. 
\end{examplebox}

\subsection{Embedding spaces }
As depicted in \cref{fig:overview}, the pairs of views $(\bxt^m,\bxt'^m)$ are passed through a shared backbone network $\phi$ to form pairs of representations $(\bh^m, \bh'^m)$
and subsequently
through \textit{separate} projectors $g_l$ to form pairs of embeddings $(\bz_l^m, \bz_l'^m)$, with 
\begin{equation}
\label{eq:defn_zml}
\textstyle
\bz_l^m =g_l(\bh^m) = g_l\circ \phi(\bxt^m) = g_l\circ \phi\circ \bt^m(\bx^m)  \in \calZ_l
\end{equation}
being the embedding of view $\bxt^m$ in embedding space $\calZ_l$.
We call $\calZ_0$ ``content'' space as it seeks invariance to \textit{all} transformations, thereby discarding all style attributes and leaving only content. We call the other $M$ spaces $\{\calZ_m \}_{m=1}^M$ ``style'' spaces as they seek invariance to \textit{all-but-one} transformation $t_m$, thereby discarding \textit{all-but-one} style attribute (that which is perturbed by $t_m$).
\subsection{Loss}
Given $M{+}1$ pairs of views $ \{(\bxt^m, \bxt'^m) \}_{m=0}^M$ sharing different transformation parameters, we learn $M{+}1$ disentangled embedding spaces by minimizing the following objective:
\begin{align}\label{eq:ours}
    \calL^{\text{ours}}\left(\phi, \{g_m\}_{m=0}^M; \{(\bxt^m, \bxt'^m) \}_{m=0}^M \right) &=
    \underbrace{%
    \lambda_0 \,
    \textcolor{posGreen}{\calL^{\text{inv}}_{\calZ_0}} +
    \textcolor{negRed}{\calL^{\text{ent}}_{\calZ_0}}}_{\text{standard loss (content} \to \calZ_0 \text{)}} + 
    \underbrace{%
    \left(%
    \sum_{m=1}^M
    \lambda_m \,
    \textcolor{posGreen}{\calL^{\text{inv}}_{\calZ_m}}\right) + 
    \textcolor{negRed}{\calL^{\text{ent}}_{\calZ}}}_{\text{additional terms (style} \to \calZ_{m}\text{'s)}},
    \\ &= \underbrace{%
    \left(%
    \sum_{m=0}^M
    \lambda_m \,
    \textcolor{posGreen}{ \textcolor{posGreen}{\calL^{\text{inv}}_{\calZ_m}}} \right)}_{M+1 \text{ inv.\ terms}} + 
    \underbrace{%
     \textcolor{negRed}{\calL^{\text{ent}}_{\calZ}}}_{\text{joint entropy}}+ 
    \underbrace{%
    \textcolor{negRed}{\calL^{\text{ent}}_{\calZ_0}}}_{\text{content entropy}}, \label{eq:ours-2}
\end{align}%
where the individual invariance ($\mathcolor{posGreen}{\calL^{\text{inv}}_{\calZ_m}}$) and (content / joint) entropy ($\mathcolor{negRed}{\calL^{\text{ent}}_{\calZ_0}}$ / $\mathcolor{negRed}{\calL^{\text{ent}}_{\calZ}}$) terms are given by
\begin{equation*}
\textcolor{posGreen}{\calL^{\text{inv}}_{\calZ_m}} =
    \textcolor{posGreen}{\calL^{\text{inv}}}\! \left(\bz^m_m, \bz'^m_m\right),\! \qquad 
    \mathcolor{negRed}{\calL^{\text{ent}}_{\calZ_0}} 
    =
    \mathcolor{negRed}{\calL^{\text{ent}}}\!
    \left(\{\bz^m_0,
    \bz'^m_0 \}_{m=0}^M \right),\! \qquad 
    \mathcolor{negRed}{\calL^{\text{ent}}_{\calZ}} 
    = 
    \mathcolor{negRed}{\calL^{\text{ent}}}\!
    \left(\{\bz^m, 
    \bz'^m \}_{m=0}^M\right),
\end{equation*}%
with 
$\bz^m = [\zb_0^m, \dots, \zb_M^m] \in \calZ_0\times ... \times \Zcal_M$
the 
concatenated embeddings of $\bxt^m$ across all 
spaces, and $\{ \lambda_m \}_{m=0}^M$ are hyperparameters weighting the invariance terms.
Note that most SSL methods employ similar hyperparameters for tuning their invariance-entropy trade-offs\footnote{e.g., BarlowTwins and VICReg have explicit $\lambda$'s, while SimCLR controls this trade-off implicitly via the temperature $\tau$.} (see \Cref{tab:ssl-terms}).
Also note that $\mathcolor{negRed}{\calL^{\text{ent}}}\! \left(\{\bz^m, \bz'^m \}_{m=0}^M\right)$ could be written as $\mathcolor{negRed}{\calL^{\text{ent}}}(\bZ, \bZ')$, with $\bZ\! =\! [\bz^0, \bz^1, \dots, \bz^M]$ and $\bZ'\! =\! [\bz'^0, \bz'^1, \dots, \bz'^M]$, to better align with the notation of \Cref{tab:ssl-terms}.
Here, \Cref{eq:ours} highlights the additional terms we add to the standard SSL loss of \cref{eq:ssl-base}, while \cref{eq:ours-2} highlights the fact that we require \textbf{two different entropy terms to ensure disentangled embedding spaces}. 
Since ``content'' is invariant to all transformations (by definition), we require $\textcolor{negRed}{\calL^{\text{ent}}_{\calZ}}$ to prevent redundancy ($M$ additional copies of content, one per style space) and $\textcolor{negRed}{\calL^{\text{ent}}_{\calZ_0}}$ to ensure content is indeed encoded in $\calZ_0$ (otherwise it could be spread across all $M{+}1$ spaces). As discussed in \Cref{sec:related}, this is a key difference compared to \citet{xiao2021what}, who also learn multiple embedding spaces but do not achieve disentanglement.

\section{Causal Representation Learning Perspective and Identifiability Analysis}
\label{app:theory}
In this section, we investigate \textit{what} is actually learned by the structured use of data augmentations in~\Cref{sec:framework}, through the lens of 
causal representation learning~\citep{scholkopf2021toward}.
To this end, we first formalize the data generation and augmentation processes as a (causal) latent variable model, and then study the identifiability of different components of the latent representation.
Our analysis strongly builds on and  extends the work of~\citet{vonKugelgen2021} by showing that the structure inherent to different augmentation transformations can be leveraged to identify not only the block of shared content variables, but also \textit{individual style components} (subject to suitable assumptions). 
\subsection{Data-generation and augmentation processes}
We assume that the observations $\bx\in\Xcal$ result from \textit{underlying latent vectors} $\bz\in\Zcal$ via an invertible \textit{nonlinear mixing function}~$f:\calZ\to\calX$, %
\begin{equation}
\label{eq:data_generating_process}
    \bz\sim p_\bz\,, \qquad \qquad \bx=f(\bz)\,.
\end{equation}
Here, $\calZ\subseteq \bbR^d$ is a \textit{latent space} capturing object properties such \textcolor{myorange}{color} or \textcolor{myblue}{rotation}; $p_\bz$ is a distribution over latents; 
and $\Xcal$ denotes the $d$-dimensional data manifold, which is typically embedded in a higher dimensional pixel space.
In the same spirit, we model the way in which augmented views $(\tilde \bx,\tilde\bx')$ are generated from $\xb$ through perturbations in the latent space:
\begin{equation}
\label{eq:augmentation_process}
    \tilde\bz,\zbt'
    \sim p_{\tilde \bz|\bz}\,, \qquad \qquad \tilde\bx=f(\tilde\bz)\,, \qquad \qquad \tilde\bx'=f(\tilde\bz')\,.
\end{equation}
The conditional $p_{\tilde \bz|\bz}$, from which the pair of \textit{augmented latents} $(\tilde\zb, \tilde\bz')$ is drawn given the original latent~$\zb$, constitutes the latent-space analogue of the image-level transformations $(\tb,\tb')\sim \calT$ in~\Cref{sec:framework}. More
specifically, $\zbt\sim p_{\zbt|\zb}$ 
captures the behavior of $f^{-1}\circ \tb \circ f$ with $\tb\sim\Tcal$
acting on $\xb=f(\zb)$.
\subsection{Content-style partition}
Typically, augmentations are designed to affect some semantic aspects of the data (e.g., color and rotation) and not others (e.g., object identity).
We therefore partition the latents into \textit{style} latents $\sb$, which \textit{are} affected by the augmentations, and \hlp{shared \textit{content} latents $\cb$}, which \textit{are not} affected by the augmentations.
Further, $p_{\zbt|\zb}$ in~\eqref{eq:augmentation_process} takes the form 
\begin{equation}
\label{eq:content_invariance}
    p_{\zbt|\zb}(\zbt~|~\zb)=\delta(\tilde\cb-\cb)p_{\tilde\sb|\sb}(\tilde\sb~|~\sb)
\end{equation}
for some style conditional $p_{\tilde\sb|\sb}$, 
such that $\zb$, $\zbt$, and $\zbt'$ in~\eqref{eq:augmentation_process} are given by
\begin{equation}
    \bz = (\hlp{\bc}, \bs), 
    \qquad\qquad \tilde \bz = (\hlp{\bc}, \tilde \bs)\,,
    \qquad\qquad \tilde \bz' = (\hlp{\bc}, \tilde \bs')\,.
\end{equation}
For this setting, it has been shown that---under suitable additional assumptions---contrastive SSL
recovers the shared content latents~$\cb$ up to an invertible function~\citep[][Thm.~4.4]{vonKugelgen2021}.

\subsection{Beyond content identifiability: separating and recovering individual style latents}
Previous analyses of SSL with data augmentations considered style latents $\sb$ as nuisance variables that should be discarded, thus seeking a pure content-based representation that is invariant to all augmentations~\citep{vonKugelgen2021,daunhawer2023identifiability}.
The focus of our study, and its key difference to these previous analyses, is that \textit{we seek to also identify and disentangle different style variables}, by 
leveraging available structure in data augmentations
that has not been exploited thus far.
First, note that each class of atomic transformation $\mathcal{T}_m$ (e.g., color distortion or rotation) typically affects a different property, meaning that it should only affect a subset of style variables.
Hence, we partition the style block into more fine-grained individual \textit{style components}~$s_m$,
\begin{equation}
    \bs = (s_1, ..., s_M)\,, 
    \qquad \tilde\bs = (\tilde s_1, ..., \tilde s_M)\,, 
    \qquad \tilde\bs' = (\tilde s'_1, ..., \tilde s'_M)\,,
\end{equation}
and 
assume that the style conditional $p_{\sbt|\sb}$ in~\eqref{eq:content_invariance} factorizes as follows:
\begin{equation}
\label{eq:factorized_style_conditional}
    p_{\sbt|\sb}(\tilde\sb~|~\sb)=\prod_{m=1}^M p_{\tilde s_m|s_m}(\tilde s_m~|~s_m)\,,
\end{equation}
where each term $p_{\tilde s_m|s_m}$ on the RHS is the latent-space analogue of $t_m\sim \Tcal_m$.

Next, we wish for our latent variable model to capture the \textit{structured} use of data augmentation 
through transformation pairs 
with \textit{shared parameters}, as described in~\Cref{sec:framework}.
Specifically, note that---unlike most prior approaches to SSL with data augmentations---we do \textit{not} create a single dataset of (``positive'') pairs $(\xbt,\xbt')$.
Instead, we construct transformation pairs $(\tb^m,\tb'^m)$ in $M{+}1$ different ways, giving rise to $M{+}1$ datasets of pairs $(\xbt^m,\xbt'^m)$, each differing in the shared (style) properties.
In particular, the $m$\textsuperscript{th} atomic transformation is shared across $(\tb^m,\tb'^m)$ by construction, such that $(\xbt^m,\xbt'^m)$ should share the same perturbed $m$\textsuperscript{th} style components $\tilde s_m=\tilde s'_m$---\textit{regardless of its original value $s_m$}.
To model this procedure, we define $M{+}1$ different ways of jointly perturbing the style variables as follows:
\begin{equation}
\label{eq:different_style_conditionals}
\begin{aligned}
 p^{(m)}(\tilde\sb,\tilde\sb'~|~\sb)
&=
\prod_{l=1}^M p^{(m)}(\tilde s_l,\tilde s'_l~|~s_l)
\qquad \text{for} \qquad m=0, \dots, M,
\\
\text{where} \qquad p^{(m)}(\tilde s_l,\tilde s'_l~|~s_l)
&=
\begin{cases}
p_{\tilde s_l|s_l}(\tilde s_l~|~s_l)\,\delta(\tilde s'_l-\tilde s_l) & \quad \text{if} \quad l=m
    \\
    p_{\tilde s_l|s_l}(\tilde s_l~|~s_l)\,p_{\tilde s_l|s_l}(\tilde s_l'~|~s_l) &        \quad  \text{otherwise} %
\end{cases}
\end{aligned}
\end{equation}
Together with $\zb=(\cb,\sb)\sim p_\zb$ as in~\eqref{eq:data_generating_process}, the conditionals in~\eqref{eq:different_style_conditionals} induce $M{+}1$  different joint distributions $p^{(m)}_{\xbt,\xbt'}$
over observation pairs $(\xbt^m,\xbt'^m)$: analogous to~\eqref{eq:augmentation_process}, we have for $m=0,\dots,M$,
\begin{equation}
\label{eq:multiple_pairs}
\sbt^m,\sbt'^m \sim p^{(m)}_{\tilde\sb,\tilde\sb'|\sb}\,, \qquad \qquad
\tilde\bx^m=f\left([\bc,\sbt^m]\right)\,, \qquad  \qquad\tilde\bx'^m=f\left([\bc, \sbt'^m]\right)\,.
\end{equation}%
\begin{remark}
    In practice, we do not generate $M{+}1$ augmented pairs for each $\xb=f(\zb)$ as described above. Instead, each pair is constructed from a different observation with $\xb^l=f(\zb^l)$ transformed according to $m:=l~\mod M{+}1$.
    In the limit of infinite data, these two options have the same effect.
\end{remark}
\begin{examplebox}
\textbf{\Cref{ex:color_rotation} (continued). }
Denote the style component capturing \hlo{color} by \hlo{$s_c$} and that capturing \hlb{rotation} by \hlb{$s_r$}.
For $m=0,1,2$, let $\zb^m=(\cb^m,s_c^m,s_r^m)$ be the latents underlying separate images~$\xb^m$.
Then the augmentations shown in~\cref{fig:overview}~(left) are captured by the following changes to the latents:

\vspace{.25em}
\begin{minipage}{0.75\textwidth}
\small 
\centering
\renewcommand{\arraystretch}{1.75}%
\begin{tabular}{ccccc}
\toprule
$m$ 
& $\zb^m$ 
& $\zbt^m$ & $\zbt'^m$ & \textbf{Shared Latents} \\
\midrule 
$0$ 
& $(\hlp{\bc^0},s_c^0,s_r^0)$ 
& $(\hlp{\bc^0},\tilde s^0_c,\tilde s^0_r)$ & $(\hlp{\bc^0},\tilde s'^0_c,\tilde s'^0_r)$ & only \hlp{content}
\\
$1$ 
& $(\hlp{\bc^1},s_c^1,s_r^1)$ 
& $(\hlp{\bc^1},\hlo{\tilde s^1_c},\tilde s^1_r)$ & $(\hlp{\bc^1},\hlo{\tilde s^1_c},\tilde s'^1_r)$ &
\hlp{content} \& \hlo{color}
\\
$2$
& $(\hlp{\bc^2},s_c^2,s_r^2)$ 
& $(\hlp{\bc^2},\tilde s^2_c,\hlb{\tilde s^2_r})$ & $(\hlp{\bc^2},\tilde s'^2_c,\hlb{\tilde s^2_r})$ &
\hlp{content} \& \hlb{rotation}
\\
\bottomrule 
\end{tabular}
\end{minipage}%
\begin{minipage}{0.25\textwidth}
\centering
    \begin{tikzpicture}
    \centering
    \newcommand{\xshift}{2.5em}
    \newcommand{\yshift}{2.5em}
    \node (C) [latent] {$\hlp{\cb}$};
    \node (S_C) [latent, yshift=-\yshift, xshift=-\xshift] {$\hlo{s_c}$};
    \node (S_R) [latent, yshift=-\yshift, xshift=\xshift] {$\hlb{s_r}$};
    \node (X) [obs, yshift=-2*\yshift, ] {$\xb$};
    \edge{C}{S_C,S_R};
    \edge{C,S_C,S_R}{X};
    \end{tikzpicture}
\end{minipage}
\vspace{0.125em}
\end{examplebox}

\subsection{Causal interpretation}
The described augmentation procedure can also be interpreted in causal terms~\citep{ilse2021selecting,mitrovic2021representation,vonKugelgen2021}. 
Given a factual observation~$\xb$, the augmented views $(\xbt^m,\xbt'^m)$
constitute pairs of \textit{counterfactuals under joint interventions on all style variables}, provided that (i) $\cb$ is a root note in the causal graph, to ensure content invariance in~\eqref{eq:content_invariance}; and (ii) the style components 
$s_m$ 
do not causally influence each other, to justify the factorization in~\eqref{eq:factorized_style_conditional} and \eqref{eq:different_style_conditionals}.\footnote{\looseness-1 The allowed structure is similar to~\citet[][Fig.~1]{suter2019robustly}; \citet[][Fig.~9]{wang2021desiderata}. However, ours is more general as content does not only confound different $s_m$, but also directly influences the observed $\xb$.} 
A causal graph compatible with these constraints is shown for~\cref{ex:color_rotation} above on the right.
As a structural causal model~\citep[SCM;][]{pearl2009causality}, this can be written as
\begin{equation}
\label{eq:SCM}
    \cb:=\bu_c, \qquad  s_m:=f_m(\cb,u_m)\,, \qquad \text{for} \qquad m=1, \dots, M,
\end{equation}
with jointly independent
exogenous variables $\bu_c$, $\{u_m\}_{m=0}^M$.
The style conditionals $p_{\sbt_m|\sbt_m}$ in~\eqref{eq:factorized_style_conditional} can then arise, e.g., from \textit{shift},  $do(s_m=f_m(\cb,u_m)+\tilde u_m)$, or \textit{perfect},  $do(\sb_m=\tilde u_m)$, interventions with independent augmentation noise $\tilde u_m$.
Note that the latter renders $\tilde s_m$ independent of all other variables.

\subsection{Style identifiability and disentanglement}
\looseness-1 By construction, $\{\cb,\tilde s_m\}$ is shared across $(\xbt^m,\xbt'^m)$ and can thus be identified \textit{up to nonlinear mixing} by contrastive SSL on the $m$\textsuperscript{th} dataset~\citep[][Thm.~4.4]{vonKugelgen2021}.
However, it remains unclear how to disentangle the two and recover only~$\tilde s_m$, i.e., how to ``remove'' $\cb$, which can separately be recovered as the only shared latent for $m=0$.
The following result,
shows that our approach from~\Cref{sec:framework} with $M+1$ alignment terms and joint entropy regularization indeed disentangles and recovers the individual style components.
\begin{restatable}[Identifiability]{theorem}{identifiability}
\label{thm:identifiability}
For the data generating process in~\eqref{eq:data_generating_process}, \eqref{eq:different_style_conditionals}, and \eqref{eq:multiple_pairs}, assume that
\begin{enumerate}[topsep=0em,itemsep=0em,
leftmargin=2em]
    \item[$A_1$.] $\Zcal$ is open and simply connected; $f$ is diffeomorphic onto its image; $p_\zb$ is smooth and fully supported on $\Zcal$; each $p_{\tilde s_m|s_m}$ is smooth and  supported on an open, non-empty set around any $s_m$; the content dimensionality $d_0$ is known;
    \item[$A_2$.] $p_\zb$ and $\{p_{\tilde s_m|s_m}\}_{m=1}^M$ are such that $\{\cb\}\cup\{\tilde s_m\}_{m=1}^M$ are jointly independent;
    \item[$A_3$.] $d_m=1$ for $m=1,\dots, M$ and
    $\{\phi_m:\Xcal\to(0,1)^{d_m}\}_{m=0}^M$ are smooth minimizers of
\end{enumerate}
\begin{equation}
\label{eq:identifiability_objective}
    \sum_{m=0}^M 
    \bbE_{p^{(m)}_{\xbt,\xbt'}}\left[\norm{
    \phi_m(\xbt)-\phi_m(\xbt')
    }_2 \right] 
    -H_{p^{(0)}_{\xbt}}\left(\left[
    \phi_0(\xbt), \dots, \phi_M(\xbt)
    \right]\right)\,.
\end{equation}
Then $\phi_0$ block-identifies~\citep[][Defn.~4.1]{vonKugelgen2021} the content $\cb$, and $\{\phi_m\}_{m=1}^M$ identify and disentangle the style latents $s_m$ in the sense that for all $m=1, \dots, M$: $\hat s_m=\phi_m(\xb)=\psi_m(s_m)$ for some invertible $\psi_m$.
\end{restatable}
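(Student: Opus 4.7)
The plan is to decompose the objective into three pieces that each contribute a distinct piece of information about the minimizer, and to combine them through change-of-variables computations in latent space (well defined because $f$ is a diffeomorphism by $A_1$). First I would handle content: because $(\xbt^0,\xbt'^0)$ share only $\cb$, the argument of~\citet[Thm.~4.4]{vonKugelgen2021} applies essentially verbatim to $\phi_0$. Invariance forces $\phi_0(\xb)=H_0(\cb)$ for some smooth $H_0$, and because the joint entropy upper-bounds $H(\phi_0(\xbt))$, maximizing it pushes $H_0(\cb)$ to the uniform on $(0,1)^{d_0}$; combined with the smoothness and topology assumptions in $A_1$ this promotes $H_0$ to a diffeomorphism, giving block-identification of $\cb$ per~\citet[Defn.~4.1]{vonKugelgen2021}.

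Next, for each $m\geq 1$ the analogous invariance of $\phi_m$ on the $m$-th pair, together with the support assumption in $A_1$ (so that all values of the non-shared style coordinates are realized with positive probability), forces $\phi_m(\xb)=h_m(\cb,s_m)$ for some smooth $h_m:\mathbb{R}^{d_0}\times\mathbb{R}\to(0,1)$. The final step, which I expect to be the main obstacle, uses the joint entropy regularizer to eliminate the residual dependence of $h_m$ on $\cb$. Under $p^{(0)}_{\xbt}$ the latents $\{\cb,\tilde s_1,\dots,\tilde s_M\}$ are jointly independent by $A_2$, and the concatenated output is $[H_0(\cb),\,h_1(\cb,\tilde s_1),\,\dots,\,h_M(\cb,\tilde s_M)]$. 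With $d_m=1$ by $A_3$, its Jacobian is block lower triangular with diagonal blocks $J_{H_0}$ and scalar entries $\partial_{s_m}h_m$. Maximizing joint differential entropy on the bounded support $(0,1)^{d_0+M}$ forces the pushforward to be Lebesgue, and combining the change-of-variables formula with the Step-1 identity $|\det J_{H_0}(c)|=p_\cb(c)$ gives the pointwise equation
\begin{equation*}
    \prod_{m=1}^M p_{\tilde s_m}(s_m)\;=\;\prod_{m=1}^M \bigl|\partial_{s_m}h_m(c,s_m)\bigr|.
\end{equation*}
Taking logs and differentiating in a single $s_l$ separates variables and yields $|\partial_{s_m}h_m(c,s_m)|=a_m(c)\,p_{\tilde s_m}(s_m)$ with $\prod_m a_m(c)\equiv 1$. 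Strict positivity of $p_{\tilde s_m}$ and smoothness of $h_m$ then fix the sign of $\partial_{s_m}h_m$, so $h_m(c,\cdot)$ is strictly monotone and integrates to $h_m(c,s_m)=a_m(c)F_{\tilde s_m}(s_m)+b_m(c)$; the codomain constraint $h_m\in(0,1)$ together with the requirement that $h_m(c,\cdot)$ push $p_{\tilde s_m}$ onto all of $(0,1)$ finally pin $a_m(c)\equiv 1$ and $b_m(c)\equiv 0$, yielding $\phi_m(\xb)=\psi_m(s_m)$ with $\psi_m=F_{\tilde s_m}$ invertible.

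The conceptual difficulty in this last step is converting a purely distributional statement (maximum entropy on a bounded domain) into a pointwise statement about the function $h_m$. Three ingredients have to conspire: the block-triangular Jacobian structure (enabled by $d_m=1$), which cleanly isolates $\partial_{s_m}h_m$ from $\partial_c h_m$; strict positivity of the style densities, which upgrades an almost-everywhere identity to genuine monotonicity of $h_m(c,\cdot)$ with a globally constant sign; and the bounded codomain $(0,1)$, which kills the otherwise-free $c$-dependent normalization $a_m(c)$ and shift $b_m(c)$. Without the latter two, one would only identify $s_m$ up to a $c$-dependent reparameterization, precisely the content--style entanglement that the theorem rules out.
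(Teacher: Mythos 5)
Your proposal is correct in outline and reaches the stated conclusion, following the paper for the first two stages (invariance forces $\phi_0(\xb)=\psi_0(\cb)$ and $\phi_m(\xb)=h_m(\cb,s_m)$; joint uniformity makes $\psi_0$ invertible). Where you genuinely diverge is the final, hardest step of removing the $\cb$-dependence of $h_m$. The paper argues distributionally: joint uniformity of $[\phi_0(\xbt),\phi_k(\xbt)]$ gives independence of $\psi_0(\cb)$ and $\psi_k(\cb,\tilde s_k)$, hence (by invertibility of $\psi_0$ and $A_2$) independence of $\cb$ and $\psi_k(\cb,\tilde s_k)$, and then Lemma~2 of \citet{brehmer2022weakly} kills the $\cb$-dependence. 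You instead run an explicit density-transport computation: block-triangular Jacobian, separation of variables after taking logs, and the bounded codomain to pin down the normalization. Both routes need $d_m=1$ at exactly the same point; your route is more self-contained (no external lemma) and yields the stronger conclusion $\psi_m\in\{F_{\tilde s_m},\,1-F_{\tilde s_m}\}$ rather than merely ``some invertible $\psi_m$'' --- note you should allow the decreasing branch. This is in fact close to an alternative argument the authors sketch (via change of variables and quotienting by the content marginal) before still falling back on Brehmer's lemma.

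Two caveats. First, you skip the paper's Step~1 (the Darmois/CDF existence construction). Since the objective is a \emph{sum} of invariance terms minus entropy, a minimizer need not drive each term to its individual optimum unless you first exhibit a feasible point attaining zero invariance \emph{and} exact joint uniformity simultaneously; your own endpoint $\phi_m^*=F_{\tilde s_m}\circ f^{-1}$ is precisely that witness, so the fix is to state it up front rather than treating invariance and maximal entropy as two separately enforced constraints. (Relatedly, for differential entropy the joint does \emph{not} upper-bound the marginal in general; what saves you is that all outputs live in $(0,1)^{d}$, so the joint entropy is $\le 0$ with equality iff jointly uniform.) Second, your pointwise identity $\prod_m p_{\tilde s_m}(s_m)=\prod_m\abs{\partial_{s_m}h_m(c,s_m)}$ uses the change-of-variables formula for the concatenated map, which presupposes its injectivity; a priori you only know smoothness, so the pushforward density is a sum over preimages and you get an inequality rather than an equality. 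The paper avoids this by outsourcing exactly these regularity issues to Prop.~5 of \citet{zimmermann2021contrastive} and Lemma~2 of \citet{brehmer2022weakly}; if you want to keep your computation self-contained you need a separate argument (e.g., via those results, or a degree/monotonicity argument in the scalar coordinate) that $\partial_{s_m}h_m$ cannot vanish before invoking the clean change of variables.
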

\begin{proof}
    Part of the proof follows a similar argument as that of~\citet[][Thm.~4.4]{vonKugelgen2021}, generalized to our setting with $M+1$ alignment terms instead of a single one, and with \textit{joint} entropy regularization.

    \paragraph{Step 1.} First, we show the existence of a solution $\{\phi_m^*\}_{m=0}^M$ attaining the global minimum of zero of the objective in~\eqref{eq:identifiability_objective}.
    To this end, we construct each $\phi_m^*$ by composing the inverse of the true mixing function with the cumulative distribution function (CDF) transform\footnote{a.k.a.~``Darmois construction''~\citep{darmois1951analyse,hyvarinen1999nonlinear,gresele2021independent,papamakarios2021normalizing}.} to map each latent block to a uniform version of itself.
    Specifically, let $\phi_0^*:=F_\cb\circ f^{-1}_{1:d_0}$, and for $m=1,\dots,M$, let $\phi_m^*:=F_{\sb_m}\circ f^{-1}_{a_m:b_m}$ with $a_m=1+\sum_{l=0}^{m-1}d_l$ and $b_m=\sum_{l=0}^{m} d_l$, where $F_v$ denotes the CDF of~$v$.
    By construction, $\phi_0^*(\xbt)$ is a function of $\cb$ only, and uniformly distributed on $(0,1)^{d_0}$; similarly, $\phi_m^*(\xbt)$ is a function of $\tilde s_m$ only and uniform on $(0,1)^{d_m}$  for $m=1,\dots,M$.
    Recall that, with probability one, $\cb$ is shared across $(\xbt,\xbt')\sim p^{(0)}_{\xbt,\xbt'}$ and $\tilde s_m$ is shared across $(\xbt,\xbt')\sim p^{(m)}_{\xbt,\xbt'}$. Hence, all the alignment (expectation) terms in~\eqref{eq:identifiability_objective} are zero. 
    Finally, since $\{\cb\}\cup\{\tilde s_m\}_{m=1}^M$ are mutually independent by assumption $A_2$, and since each $\phi_m^*$ for $m=0,...,M$ is uniform on $(0,1)^{d_m}$, it follows that $\left[\phi_0^*(\xbt), \dots, \phi_M^*(\xbt)\right]$ is jointly uniform on $(0,1)^d$. 
    Hence, the entropy term in~\eqref{eq:identifiability_objective} is also zero.

    \paragraph{Step 2.} Next, let $\{\phi_m\}_{m=0}^M$ be any other solution attaining the global minimum of~\eqref{eq:identifiability_objective}. By the above existence argument, this implies that (i) $\phi_m(\xbt)=\phi_m(\xbt')$ almost surely w.r.t.\ $p^{(m)}_{\xbt,\xbt'}$ for $m=0,\dots,M$; and (ii)~$\left[\phi_0(\xbt), \dots, \phi_M(\xbt)\right]$ is jointly uniform on $(0,1)^d$. 
    As shown by~\cite{vonKugelgen2021}, the invariance constraint (i) together with the postulated data generating process and assumption $A_1$ implies that each $\phi_m\circ f$ can only be a function of the latents that are shared almost surely across $(\xbt,\xbt')\sim p^{(m)}_{\xbt,\xbt'}$. That is, $\phi_0(\xb)=\psi_0(\cb)$ and  $\phi_m(\xb)=\psi_m(\cb,s_m)$ for $m=1,\dots,M$.
    By $A_1$ and constraint (ii), $\psi_0$ maps a regular density to another regular density and thus must be invertible~\citep[][Prop.~5]{zimmermann2021contrastive}.
    
    \paragraph{Step 3.} It remains to show that $\psi_m$ is invertible and actually cannot functionally depend on $\cb$ for $m=1,...,M$, for this would otherwise violate the maximum entropy (uniformity) constraint (ii). 
    Fix any $k\in\{1,\dots,M\}$.
    By constraint (ii), $[\phi_0(\xbt),\phi_k(\xbt)]=[\psi_0(\cb),\psi_k(\cb,\tilde s_k)]$ is jointly uniform on $(0,1)^{d_0+d_k}$. Hence, $\psi_0(\cb)$ and $\psi_k(\cb,\tilde s_k)$ are independent.
    Since $\psi_0$ is invertible, this implies that $\cb$ and $\psi_k(\cb,\tilde s_k)$ are independent, too.
    Together with the independence of $\cb$ and $\tilde s_k$ ($A_2$), it then follows from Lemma 2 of~\citet{brehmer2022weakly}
    that $\psi_k$ must be constant in (i.e., cannot functionally depend on) $\cb$.\footnote{Invoking this Lemma requires $d_k=1$ which holds by $A_3$; all other proof steps would work for arbitrary known style dimensions. }
        %
    %
    \iffalse
    Thus, by the change of variable formula, we have%
    \footnote{this part of the proof is inspired by~\citet[][Thm.~4.1]{von2023nonparametric}} 
    \begin{equation}
    \label{eq:one}
        p(\cb)p(\sbt_k)
        =p(\psi_0(\cb))p(\psi_k(\cb,\sbt_k)) \abs{\det \frac{\partial \psi_0}{\partial \cb}(\cb)}\abs{\det
        \frac{\partial \psi_k}{\partial \sbt_k}(\cb,\sbt_k)
        }\,.
    \end{equation}
    where $\det$ denotes the determinant, $\frac{\partial \psi_0}{\partial \cb}$ the Jacobian matrix of the vector valued function $\psi_0$, and $\frac{\partial \psi_k}{\partial \sbt_k}$ the Jacobian of $\psi_k(\cb,\cdot)$.
    Similarly, by the change of variables $\cb\mapsto \psi_0(\cb)$, we have
    \begin{equation}
    \label{eq:two}
        p(\cb)
        =p(\psi_0(\cb))\abs{\det \frac{\partial \psi_0}{\partial \cb}(\cb)}\,.
    \end{equation}
    Taking the quotient of~\eqref{eq:one} and~\eqref{eq:two}, we obtain \textit{for all} $\cb$:
    \begin{equation}
        \label{eq:three}
        p(\sbt_k)
        =p(\psi_k(\cb,\sbt_k)) \abs{\det
        \frac{\partial \psi_k}{\partial \sbt_k}(\cb,\sbt_k)
        }\,,
    \end{equation}
    which implies that $\psi_k$ must be constant in $\cb$~\citep[][Lemma 2]{brehmer2022weakly}.
    \fi
    %
    Since $k$ was arbitrary, it follows that $\phi_m(\xbt)=\psi_m(\cb,\sbt_m)=\psi_m(\sbt_m)$ for $m=1,\dots,M$.
    Finally, invertibility of $\psi_m(\tilde s_m)$ for $m=1,\dots, M$ follows from $A_1$ and Prop.~5 of~\citet{zimmermann2021contrastive}.
    Together with $\phi_0(\xbt)=\psi_0(\cb)$ (established above), this concludes the proof.
\end{proof}

\paragraph{Discussion of~\cref{thm:identifiability}.}
The technical assumption $A_1$  is also needed to prove content identifiability~\citep{vonKugelgen2021}.
Assumption~$A_2$, which requires that the augmentation process renders $\cb$ and $\{\tilde s_m\}_{m=1}^M$ independent, is specific to our extended analysis.
It holds, e.g., if (a) $p_\zb$ is such that $\cb$ and $\{s_m\}_{m=1}^M$ are independent to begin with; or if (b) $p_{\sbt|\sb}=p_{\sbt}$ does not depend on $\sb$, as would be the case for \textit{perfect} interventions.
As discussed in more detail in~\Cref{app:related}, (a) relates to work on multi-view latent correlation maximization~\citep{lyu2021latent}, nonlinear ICA~\citep{Gresele19}, and disentanglement~\citep{locatello2020weakly,ahuja2022weakly}, whereas (b) relates to work in weakly supervised causal representation learning~\citep{brehmer2022weakly}.
In case (b), we could actually also allow for causal relations among individual style components $s_m\to s_{m'}$, as such links are broken by perfect interventions. 
When $A_2$ does not hold (e.g., for content-dependent style and \textit{imperfect} interventions---arguably the most realistic setting), identifiability of the style components seems infeasible, consistent with existing negative results~\citep{brehmer2022weakly,squires2023linear}. 
We hypothesize that, in this case, 
the exogenous style variables $u_m$ in~\eqref{eq:SCM}, which capture any style information not due to $\cb$ and are jointly independent by assumption, are recovered in place of $s_m$.

\section{Experiments}
\label{sec:exps}
We now present our experimental results. First, we demonstrate how our method can \textit{fully} disentangle content from style (\Cref{sec:exps:numerical,sec:exps:cdsprites}) using a simplified single-embedding-space model and synthetic datasets.  Next, we move to a multi-embedding-space setup and the ImageNet dataset, demonstrating how our method can improve performance on downstream tasks by retaining more style information (\Cref{sec:exps:imagenet}).

\subsection{Numerical dataset: Recovering only content despite varying embedding sizes}\label{sec:exps:numerical}
We start with a single embedding space and a simplified goal: to recover \textit{only content} i.e., to \textit{fully} disentangle content from style. In particular, we show how this can be achieved by appropriately tuning/adapting $\lambda_0$ in \Cref{eq:ours}. Note that the additional style terms of \Cref{eq:ours} disappear in this simplified, content-only setting.

\subsubsection{Setup}
\paragraph{Data.} Following \citet[Sec.~5.1]{vonKugelgen2021}, we generate synthetic data pairs $(\bx, \Tilde{\bx})=(f(\bc, \bs), f(\bc, \Tilde{\bs}))$ with shared 
content $\bc \sim\calN(0, \Sigma_c)$, style~$\bs|\bc \sim \calN(\ba + \bB \bc, \Sigma_s)$, and perturbed style $\Tilde{\bs}\sim \calN(\bs, \Sigma_{\Tilde{s}})$. We choose the simplest setup with $\Sigma_c$, $\Sigma_s$ and $\Sigma_{\Tilde{s}}$ set to the identity, and $\dim(\bc) = \dim (\bs) = 5$. See~\citet[App.~D]{vonKugelgen2021} for further details on the data-generation process.

\paragraph{Training and evaluation.} \looseness-1 We train a simple encoder $\phi$ (3-layer MLP) with SimCLR using: (i) a fixed setting $\lambda_0=1$, corresponding to the standard SimCLR objective; and (ii) an adaptive setting where we tune/increase $\lambda_0$ until the invariance term is sufficiently close to zero (see \Cref{app:impl:lambda} for details). We then report the $r^2$ coefficient of determination in predicting the ground-truth content $\bc$ and style $\bs$ from the learned embedding~$\bz = \phi(\bx)$.

\subsubsection{Results}
With a fixed $\lambda_0 = 1$, \cref{fig:numerical_result} shows how increasing the size of the learned embedding $\bz$ makes us better able to predict the style $\bs$. This implies that, as the embedding size $\dim (\bz)$ increases beyond that of the true content size $\dim (\bc) = 5$, excess capacity in $\bz$ is used to encode style (since that increases entropy). This was also observed in \citet[Fig.~10]{vonKugelgen2021}.

However, we can prevent this by adapting/increasing $\lambda_0$ until the invariance term is sufficiently close to zero (see \Cref{app:impl:lambda} for details), allowing us to recover \textit{only content} in $\bz$ even when there is excess capacity.

\begin{figure}[tb]
\begin{minipage}{\textwidth}
  \begin{minipage}[b]{0.49\textwidth}
  \centering
  \includegraphics[width=0.8\linewidth]{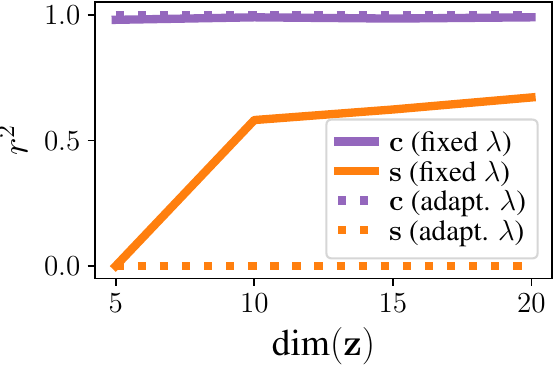}
  \captionof{figure}{\textbf{Numerical dataset: Recovering only content despite varying embedding sizes.} $r^2$ in predicting the ground-truth \textcolor{mypurple}{content $\bc$} and \textcolor{myorange}{style $\bs$} from the learned embedding~$\bz$. For a fixed value of $\lambda$, excess dimensions of $\bz$ are used to capture style. We can prevent this by adapting/increasing $\lambda$. Note: $\dim (\bc)\! =\! 5$.}\label{fig:numerical_result}
  \end{minipage}\hfill
  \begin{minipage}[b]{0.49\textwidth}
  \centering
  \begin{tabular}{@{}llcc@{}}
    \toprule
    $\lambda$ & \textbf{Augm.\ Strength}\hspace{-1mm} & \textcolor{mypurple}{\textbf{Content}\! ($\bc$)}\hspace{-1mm} & \textcolor{myorange}{\textbf{Style}\! ($\bs$)} \\ 
    \midrule
    standard & weak                    &  1.0          & 0.75       \\
     & medium        &   1.0            &  0.74         \\
     & strong        &  1.0              & 0.38           \\ \midrule
    adapted & weak   & 1.0               &     0.08         \\
     & medium        &   1.0             & 0.05          \\
     & strong                  & 1.0               & 0.04          \\ \bottomrule
    \end{tabular}
    \captionof{table}{\textbf{ColorDSprites: Recovering only content despite varying augmentation strengths.} $r^2$ in predicting ground-truth factor values from an embedding $\bz$ learned with SimCLR. Stronger augmentations lead to more discarding of style. Adapting/increasing $\lambda$ (from its standard value) reliably discards all style, regardless of the augmentation strength. Full results in \Cref{tab:aug-strengths:full} of \Cref{app:further-results:cdsprites}.%
    }\label{tab:aug-strengths:main}
  \end{minipage}
\end{minipage}
\end{figure}

\subsection{ColorDSprites: Recovering only content despite varying augmentation strengths}\label{sec:exps:cdsprites}
We continue with a single embedding space and the simplified goal of recovering \textit{only content} i.e., \textit{fully} disentangling content from style. We again show how this can be achieved by appropriately tuning/adapting $\lambda_0$ in \Cref{eq:ours} (where additional style terms disappear in the content-only setting), but this time with: (1) an image dataset; and (2) varying augmentation strengths.

\subsubsection{Setup} 
\paragraph{Data.} We use a colored version of the DSprites dataset~\citep{locatello2019challenging} which contains images of 2D shapes generated from 6 independent ground-truth factors (\# values): color (10), shape (3), scale (6), orientation (40), x-position (32) and y-position (32). In addition, we use weak, medium and strong augmentation strengths, with examples shown in \Cref{fig:augm_str_cds} of \cref{app:impl:cdsprites}.

\paragraph{Training and evaluation.} We train SimCLR models using: (i) a fixed setting $\lambda_0=1$, corresponding to the standard SimCLR objective; and (ii) an adaptive setting where we tune/increase $\lambda_0$ until the invariance term is sufficiently close to zero (see \Cref{app:impl:lambda} for details). We then report the $r^2$ coefficient of determination in predicting the ground-truth factor values from the learned embedding $\bz = \phi(\bx)$ with a linear classifier.

\subsubsection{Results} 
\Cref{tab:aug-strengths:main} shows that: (i) for fixed $\lambda_0$, the augmentation strengths severely affect the amount of style information in $\bz$; and (ii) by increasing/adapting $\lambda_0$, we can reliably remove (almost) all style, regardless of the augmentation strengths. \Cref{tab:aug-strengths:full} of \Cref{app:further-results:cdsprites} gives the full set of results, including per-factor $r^2$ values and the corresponding results for VICReg.

\subsection{ImageNet: Improving downstream performance by keeping more style}\label{sec:exps:imagenet}
We now move to multiple embedding spaces and a more realistic dataset, with the goal of keeping more style information to improve performance on a diverse set of downstream tasks.

\subsubsection{Setup}
\paragraph{Data.} \looseness-1 We use a blurred-face ImageNet1k \citep{russakovsky2014imagenet} dataset and the standard augmentations or transformations (random crop, horizontal flip, color jitter, grayscale and blur). See \Cref{app:impl:imagenet} for further details.

\paragraph{Training.} We train all models for 100 epochs and use both SimCLR~\citep{chen2020simple} and VICReg~\citep{bardes2022vicreg} as the base methods. For our method (+ Ours), we group the transformations into spatial (crop, flip) and appearance (color jitter, greyscale, blur) transformations, giving rise to $M=2$ style embedding-spaces. We also train two versions of our method: \textit{Ours-FT}, which fine-tunes a base model, \textit{Ours-Scratch}, which trains a model from scratch (random initialization) with our multi-embedding-space objective (i.e., \cref{eq:ours}). For VICReg, we found training with our method from scratch (\textit{VICReg + Ours-Scratch}) to be quite unstable, so this is excluded. See \Cref{app:impl:imagenet} for further implementation details.

\paragraph{Evaluation.} We follow the setup of \citet{ericsson2021well} to evaluate models on 13 diverse downstream tasks, covering object/texture/scene classification, localization, and keypoint estimation. We then group these datasets/tasks by the information they most depend on, Spatial, Appearance or Other (not spatial- or appearance-dominant), resulting in the following groups:

\begin{itemize}
    \item \textit{Spatial:} The Caltech Birds~(CUB)~\citep{cub} tasks of bounding-box prediction ($\text{CUB}_{bbox}$) and keypoint prediction ($\text{CUB}_{kpts}$).
    \item \textit{Appearance:} The Describable Textures Dataset (DTD, \citealt{dtd}), and the color-reliant tasks of Oxford Flowers~\citep{flowers} and Oxford-IIIT Pets~\citep{pets}.
    \item \textit{Other:} FGVC Aircraft~\citep{fgvcaircraft}, Caltech-101~\citep{caltech101}, Stanford Cars~\citep{stanfordcars}, CIFAR10~\citep{cifar}, CIFAR100~\citep{cifar}, Caltech Birds~(CUB, \citealt{cub}) class prediction ($\text{CUB}_{cls}$), SUN397~\citep{sun397} and VOC2007~\citep{voc2007}.
\end{itemize}

\subsubsection{Results}
\begin{figure}[tb]
    \centering
    \includegraphics[width=\linewidth]{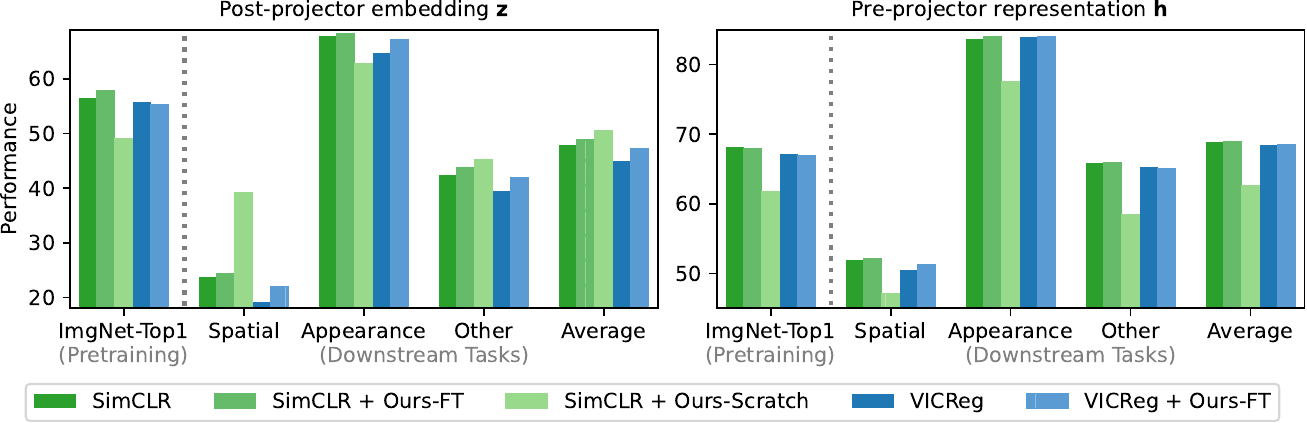}
    \caption{\textbf{ImageNet: Improving downstream performance by keeping more style.} We report linear-probe performance on ImageNet and \textit{grouped} downstream tasks when using both $\bz$ (left, post projector) and $\bh$ (right, pre projector). 13 downstream tasks are grouped by the information they most depend on: spatial, appearance or other (not spatial- or appearance-dominant). We also report the average over tasks (see \cref{tab:imagenet-full} of \Cref{app:further-results:imagenet} for per-task results) rather than over groups. All bars show top-1 accuracy (\%) except for those in the Spatial group, which show $r^2$.}
    \label{fig:imagenet-bar-results}
\end{figure}

\paragraph{Post-projector embedding $\bz$.} \Cref{fig:imagenet-bar-results} (left) shows that: (i) using our method to fine-tune (\textit{+ Ours-FT}) boosts downstream performance for both SimCLR and VICReg variants, implying that more style information has been kept post-projector; and (ii) using our method to train a model from scratch (\textit{+ Ours-Scratch}) leads to an even bigger boost in downstream performance, i.e., even more style retention post projector. Furthermore, for \textit{SimCLR + Ours-Scratch}, note that this \textit{improved downstream performance comes at the cost of ImageNet performance}---the pre-training task for which the transformations were designed, and which thus likely benefits from some discarding of ``style'' information.

\paragraph{Pre-projector representation $\bh$.} Unfortunately, this improved performance with the post-projector $\bz$ did not translate into improved performance with the pre-projector $\bh$. In particular, \Cref{fig:imagenet-bar-results} (right) shows that: (i) training with our method from scratch hurts downstream performance when using $\bh$, despite improved performance when using $\bz$; (ii) fine-tuning with our method only leads to very marginal gains when using $\bh$, despite improved performance when using $\bz$; and (iii) downstream performance is still significantly better with $\bh$ than $\bz$, as shown by the gap of 20 percentage points.

\paragraph{Summary.} These results validate the ability of our method to keep more style information post-projector but raise even more questions about the role of the projector in self-supervised learning~\citep{bordes2023guillotine, jing2022understanding, gupta2022understanding, xue2024investigating}.

\section{Related Work}\label{sec:related}
\label{app:related}

\begin{table}[t]
    \centering
    \caption{\small \textbf{High-level comparison with \citet{xiao2021what}.} While both use structured augmentations and multiple embedding spaces to capture style attributes of the data, only ours seeks disentangled embedding spaces and provides theoretical grounding/analysis.}
    \vspace{0.5em}
    \label{tab:framework_differences}
    \begin{tabular}{@{}l|cccc@{}} \toprule
         \textbf{Method} & \textbf{Structured} & \textbf{Multiple} & \textbf{Disentangled} & \textbf{Theoretical} \\
          & \textbf{augmentations} & \textbf{embeddings} & \textbf{embeddings} & \textbf{underpinning} \\
         \midrule
         \citet{xiao2021what} & \cmark & \cmark & \xmark  & \xmark  \\
         Ours & \cmark & \cmark   & \cmark  & \cmark  \\
         \bottomrule
    \end{tabular}
\end{table}
\paragraph{Self-supervised learning.} Prior works in self-supervised learning have investigated different ways to retain more style information, including the prediction of augmentation parameters~\citep{lee2020sla, lee2021improving}, seeking transformation equivariance~\citep{dangovski2022equivariant}, employing techniques that improve performance when using a linear projector~\citep{jing2022understanding}, and, most related to our approach, learning multiple embedding spaces which seek different invariances~\citep{xiao2021what}. \Cref{tab:framework_differences} presents the key differences between our framework and that of \citet{xiao2021what}. While both frameworks learn multiple embedding spaces using structure augmentations, only ours: 1) learns \textit{disentangled} embedding spaces, due to the \textit{joint entropy} term in \Cref{eq:ours-2} ensuring no duplicate or redundant information; and 2) provides a \textit{theoretical analysis} with the conditions under which the framework recovers/identifies the underlying style attributes of the data. Additionally, at the implementation level, our construction of image views permits more negative samples for a given batch size, as depicted in \Cref{fig:xiao_loo_comparison} of \Cref{app:related:xiao}.

\paragraph{Disentanglement in generative models.} In a generative setting, disentangled representations are commonly sought~\citep{desjardins2012disentangling, higgins2017beta, eastwood2018framework,eastwood2023dci}. In the vision-as-inverse-graphics paradigm, separating or disentangling ``content'' and ``style'' has a long history~\citep{tenenbaum1996separating, kulkarni2015deep}. 
Purely based on i.i.d.\ data and without assumptions on the model-class, disentangled representation learning is generally impossible~\citep{hyvarinen1999nonlinear,locatello2019challenging}.
More recently, generative disentanglement has thus been pursued
with additional weak supervision in the form of paired data~\citep{bouchacourt2018multi,locatello2020weakly}.

\paragraph{Identifiability in disentangled and causal representation learning.} 
Our~\cref{thm:identifiability} can be viewed as an extension of the content block-identifiability result of~\citet[][Thm.~4.4]{vonKugelgen2021}, which was generalized to a multi-modal setting with distinct mixing functions $f_1\neq f_2$ and additional modality-specific latents by~\citet[][Thm.~1]{daunhawer2023identifiability}.
The two options discussed at the end of~\Cref{app:theory} for satisfying assumption $A_2$---(a) independent style variables, and (b) perfect interventions---can be used to draw additional links to existing identifiability results. 
Option (a) relates to a result of~\citet[][Thm.~2]{lyu2021latent} showing that $\sbt$ and $\sbt'$ can be block-identified through latent correlation maximization with invertible encoders, provided that $\cb$, $\sbt$, and $\sbt'$  are mutually independent. Note, however, that
\Cref{thm:identifiability} establishes a more fine-grained disentanglement into individual style components.
On the other hand, \citet{Gresele19} and \citet{locatello2020weakly} prove identifiability of individual latents for the setting in which all latents are mutually independent and subject to change (with probability $>0$), i.e., without an invariant block of content latents. 
Option (b) relates to a result of~\citet[][Thm.~1]{brehmer2022weakly} showing that all variables (and the graph) in a causal representation learning setup can be identified through weak supervision in the form of pairs $(\xb,\xbt)$ arising from single-node perfect interventions by fitting a generative model via maximum likelihood.

Closely related is also the work of~\citet{ahuja2022weakly} who do not assume independence of latents, and also consider learning from $M$ views arising from sparse perturbations, but require perturbations on all latent blocks for full identifiability. 
The concurrent work of~\citet{yao2024multi} also studies learning identifiable representations from two or more views via (contrastive SSL). They focus on block-identifiability results in a partially observed setting where each view only depends on a subset of latent variables, motivated from a multi-modal, rather than the counterfactual augmentation-based perspective explored in the present work.

\section{Conclusion}
\label{sec:conclusion}

We have presented a self-supervised framework that uses structured data augmentations to learn disentangled representations. By formalizing this framework from a causal-latent-variable-model perspective, we proved that it can recover
not only invariant content latents, but also (multiple blocks of) varying style latents. Experimentally, we demonstrated how our framework can fully disentangle content from style and how it can improve performance on downstream tasks by retaining more style information.

Future work may investigate new types of data augmentations that are designed for disentangling rather than discarding, as well as their use for self-supervised (causal) representation learning. Future work may also investigate the role of the projector in style retention, as well as its interplay with explicit style-retaining objectives.

\clearpage
%
%

%

%

%

\iffalse
\subsection*{Acknowledgements}
The authors thank Adrien Bardes, Quentin Garrido, Yash Sharma and Luigi Gresele for helpful discussions and comments. This work was supported by the T\"ubingen AI Center (FKZ: 01IS18039B) and by the Deutsche Forschungsgemeinschaft (DFG, German Research Foundation) under Germany’s Excellence Strategy – EXC number 2064/1 – Project number 390727645. 
The authors declare no competing interests.
\fi

\bibliography{lnib}
\bibliographystyle{apalike}

\clearpage
\appendix

\addcontentsline{toc}{section}{Appendices}%
\part{Appendices} %
\changelinkcolor{black}{}
\parttoc%
\newpage
\changelinkcolor{Blue}{}

\section{Implementation Details}\label{app:impl}

\subsection{ColorDSprites}\label{app:impl:cdsprites}
\Cref{fig:augm_str_cds} depicts samples from the ColorDprites dataset when transformed with transformations/augmentations of different strengths.

\begin{figure}[b]
    \centering
    \begin{subfigure}[b]{0.5\linewidth}
        \centering
        \includegraphics[width=0.9\linewidth]{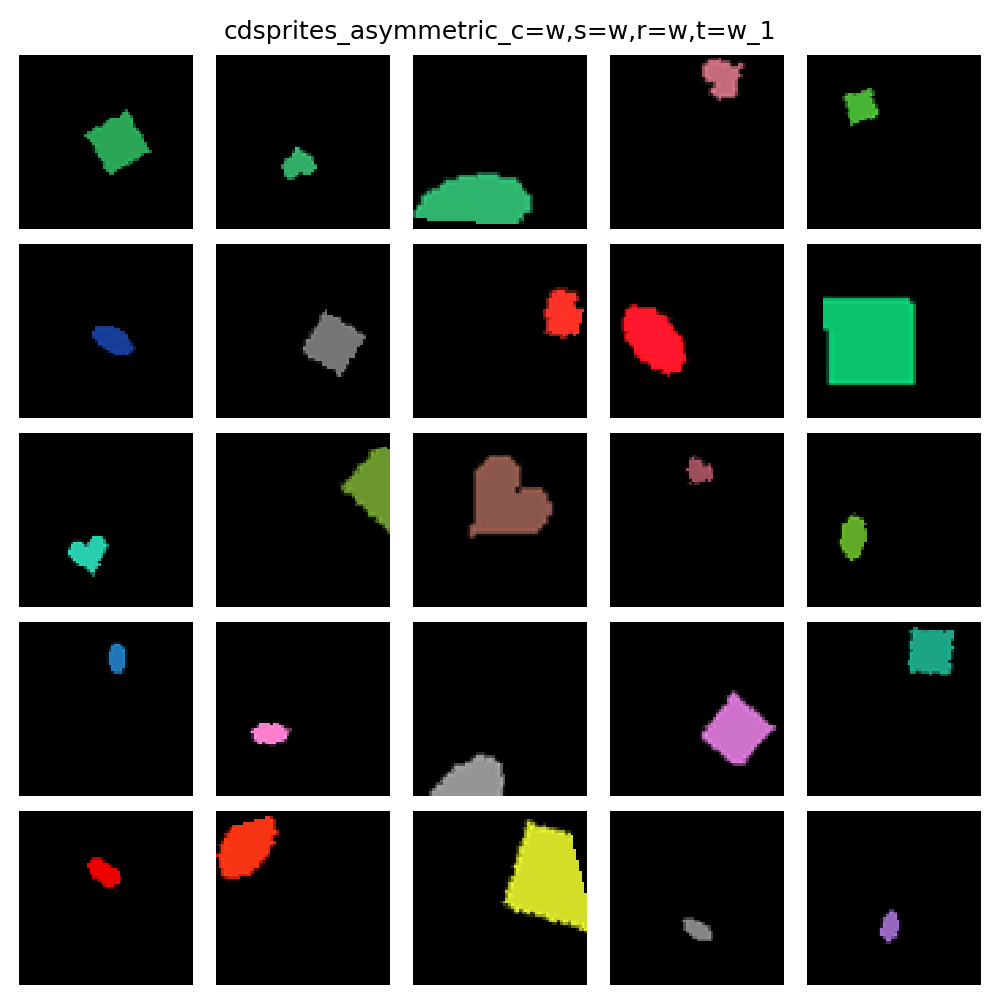}
        \caption{Weak 1}
        \label{fig:augm_str_cds:weak1}
    \end{subfigure}%
    \begin{subfigure}[b]{0.5\linewidth}
        \centering
        \includegraphics[width=0.9\linewidth]{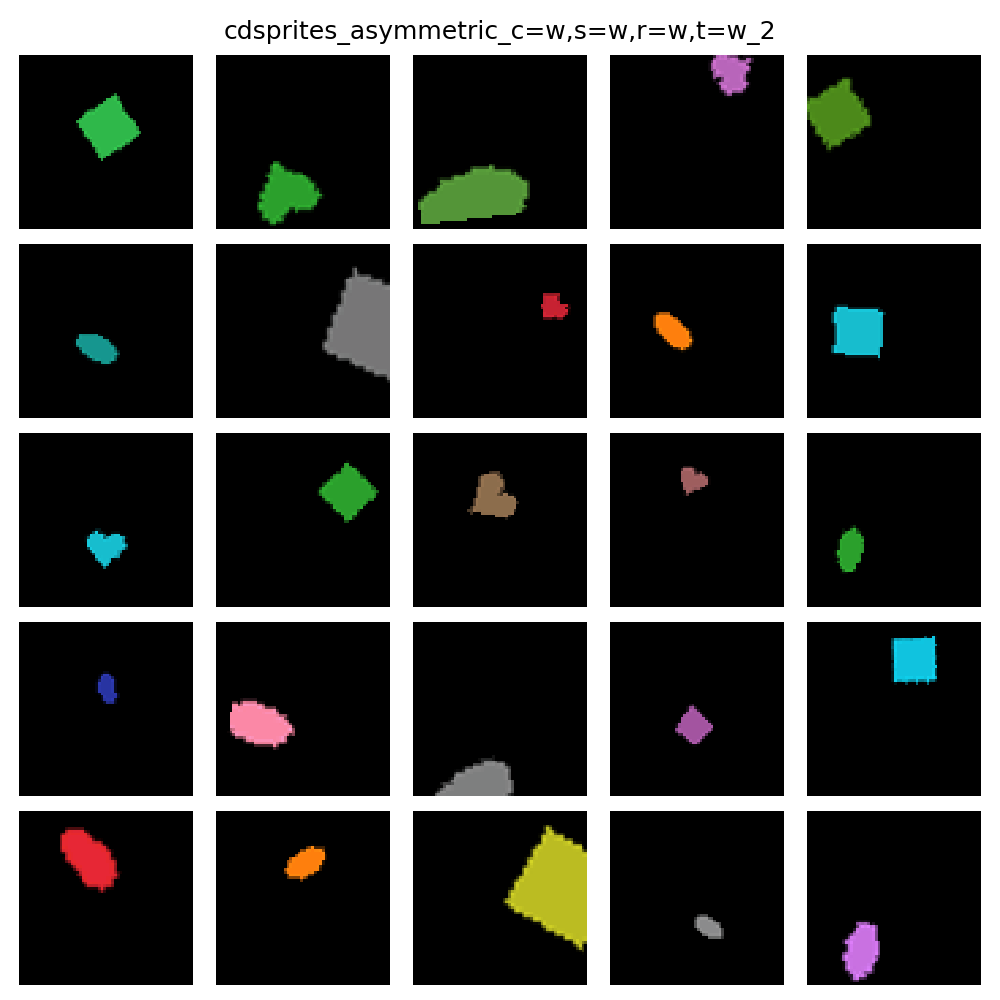}
        \caption{Weak 2}
        \label{fig:augm_str_cds:weak2}
    \end{subfigure}
    \begin{subfigure}[b]{0.5\linewidth}
        \centering
        \includegraphics[width=0.9\linewidth]{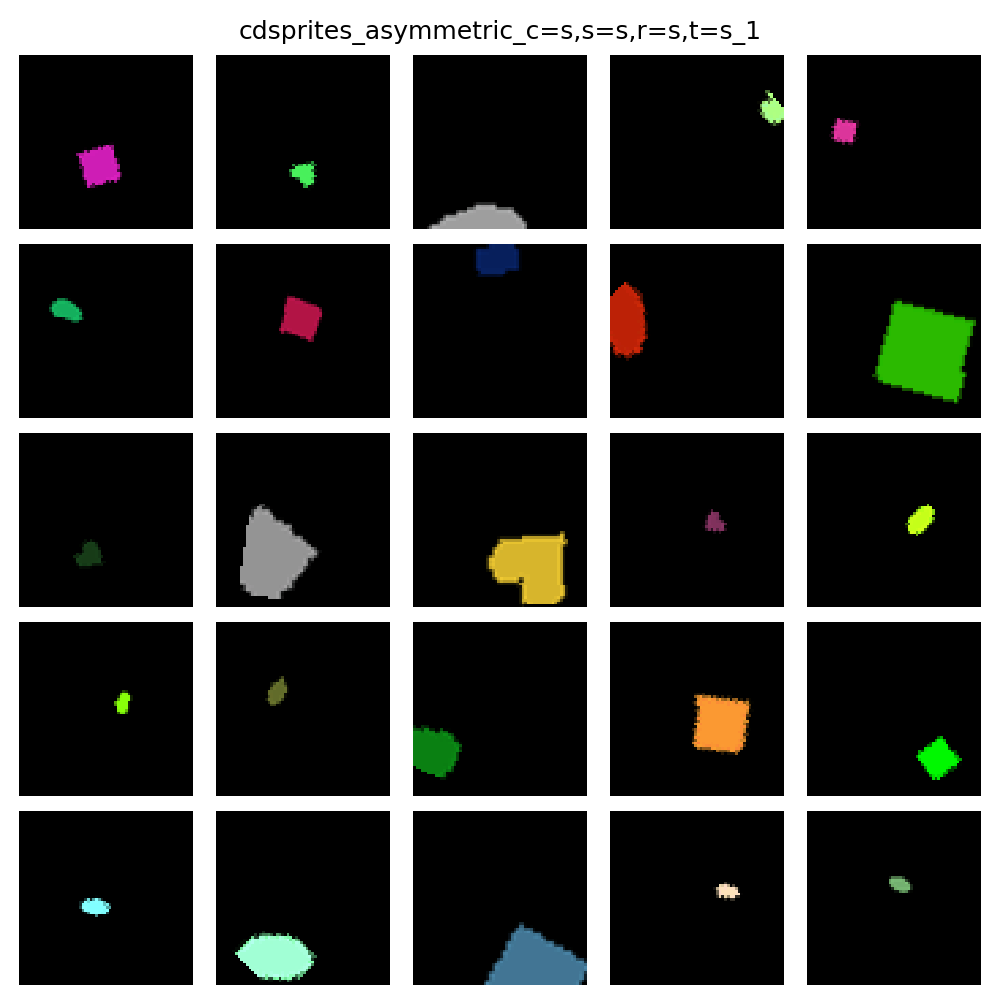}
        \caption{Strong 1}
        \label{fig:augm_str_cds:str1}
    \end{subfigure}%
    \begin{subfigure}[b]{0.5\linewidth}
        \centering
        \includegraphics[width=0.9\linewidth]{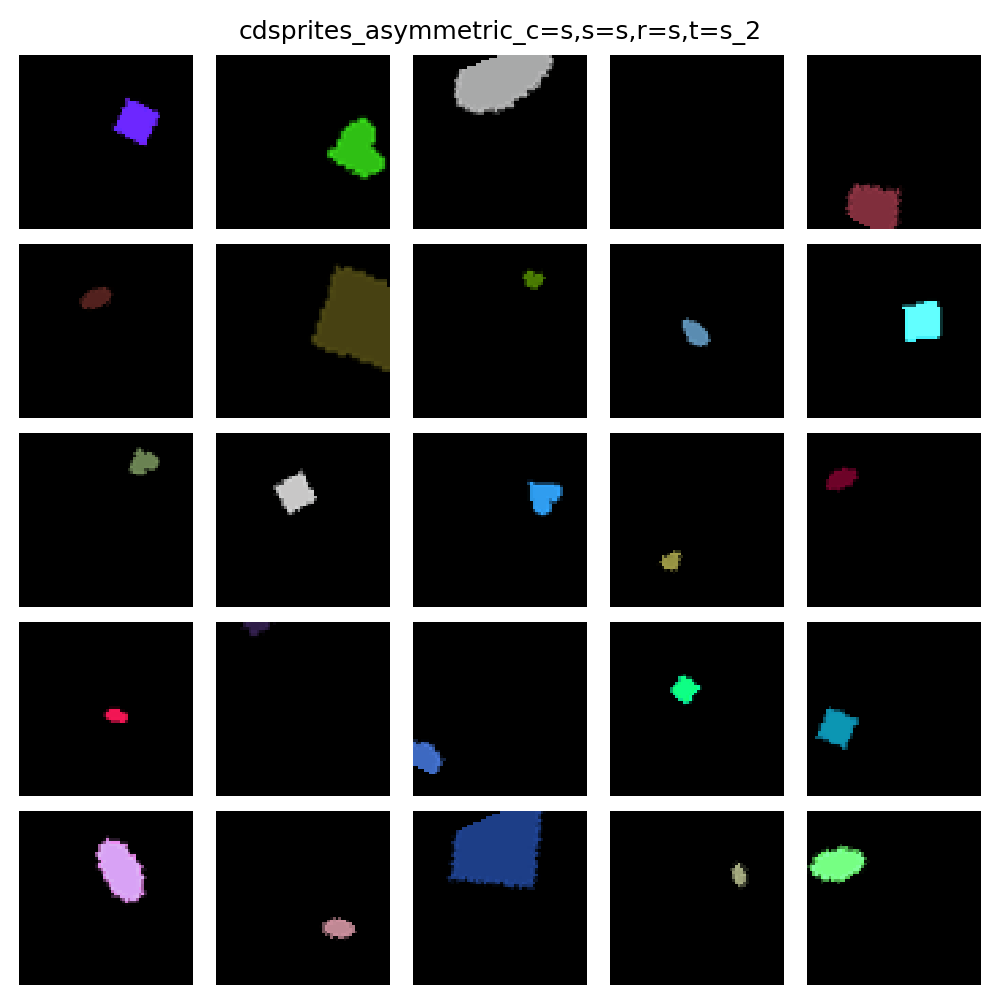}
        \caption{Strong 2}
        \label{fig:augm_str_cds:str2}
    \end{subfigure}
    \caption{\textbf{ColorDSprites: Varying augmentation strengths.} Columns show augmentation pairs of the same strength. Note that images are more similar across (a) \& (b) than across (c) \& (d), in terms of the following style attributes: color, orientation, scale, translation and X-Y position.}
    \label{fig:augm_str_cds}
\end{figure}

\subsection{ImageNet}\label{app:impl:imagenet}

\textbf{Pretraining.} Our ImageNet1k pretraining setup is based on the settings in \citet{bardes2022vicreg}, which can be consulted for full details. We train ResNet50~\citep{resnet} models for only 100 epochs, with 3-layer projectors of dimension 8196. The optimizer is LARS~\citep{You2017LargeBT,Goyal2017AccurateLM}, the batch size is 2048 and the learning rate follows a cosine decay schedule~\citep{Loshchilov2016SGDRSG}.

The data augmentation also follows \citet{bardes2022vicreg} and is applied asymetrically to the two views. It includes crops, flips, color jitter, grayscale, solarize and blur. These atomic augmentations are split into two groups: \textit{spatial} (crops and flips) and \textit{appearance} (color jitter, grayscale, solarize and blur). Thus, the number of ``style" attributes in this setting are $M=2$.

While we aim for fair experiments that use default hyperparameters, projectors, and augmentation settings, we note that these are optimized for existing SSL methods that prioritize information removal. Perhaps other settings, such as different augmentations explored in \citet{xiao2021what} and \citet{lee2021improving}, can be beneficial in our framework which instead aims to retain and disentangle information.

\textbf{Downstream evaluation.} Our downstream evaluation follows that of \citet{ericsson2021well}. We train linear models (logistic or ridge regression) on frozen pre-projector representations $\bh$ \textit{and} post-projector embeddings $\bz$. Images are cropped to $224 \times 224$, with $L2$ regularization searched using 5-fold cross-validation over 45 logarithmically spaced values in the range $10^{-6}$ to $10^5$.

\subsection{Selecting \texorpdfstring{$\lambda_m$}{}}\label{app:impl:lambda}
We now describe our procedure for choosing our $\lambda_m$ hyperparameters in \cref{eq:ours}. As a reminder, our goal is to set $\lambda_m$ such that the invariance terms are sufficiently close to zero, since we want each embedding space \textit{to fully achieve its particular invariance}. Note how this differs from the standard goal in self-supervised learning, where one usually chooses an invariance-entropy trade-off (see \Cref{eq:ssl-base} and \Cref{tab:ssl-terms}) that maximizes performance on the pretraining task (e.g., ImageNet Top-1 accuracy). For this reason, we place the $\lambda_m$'s on invariance terms \textcolor{posGreen}{$\calL^{\text{inv}}$} rather than the entropy term \textcolor{negRed}{$\calL^{\text{ent}}$}, as done in some prior works~\citep{wang2020understanding,zbontar2021barlow, bardes2022vicreg}.

While we could choose our $\lambda_m$'s using a fine-grained grid search, such that this invariance term is close to zero at the end of training, we instead choose to iteratively adapt the $\lambda_m$'s during training using a dual-ascent approach. In particular, given a step size or learning rate $\eta$ and tolerance level $\epsilon$, we perform iterative gradient-based updates of $\lambda_m$ during training using
\begin{equation}
    \lambda_m^t \gets \lambda_m^{t - 1} + \eta \cdot \text{relu}(\textcolor{posGreen}{\calL^{\text{inv}}}(\theta^t) - \epsilon),
\end{equation}
where the model parameters $\theta^t$ at step $t$ are updated during an ``inner loop'', and the hyperparameters $\lambda_m^t$'s at step $t$ are updated during the ``outer loop''.

\section{Detailed comparison with \texorpdfstring{\citet{xiao2021what}}{}}%
\label{app:related:xiao}
\begin{figure}[tb]\vspace{-5mm}
    \centering
    \begin{subfigure}[b]{\textwidth}
        \centering
        \includegraphics[width=\linewidth]{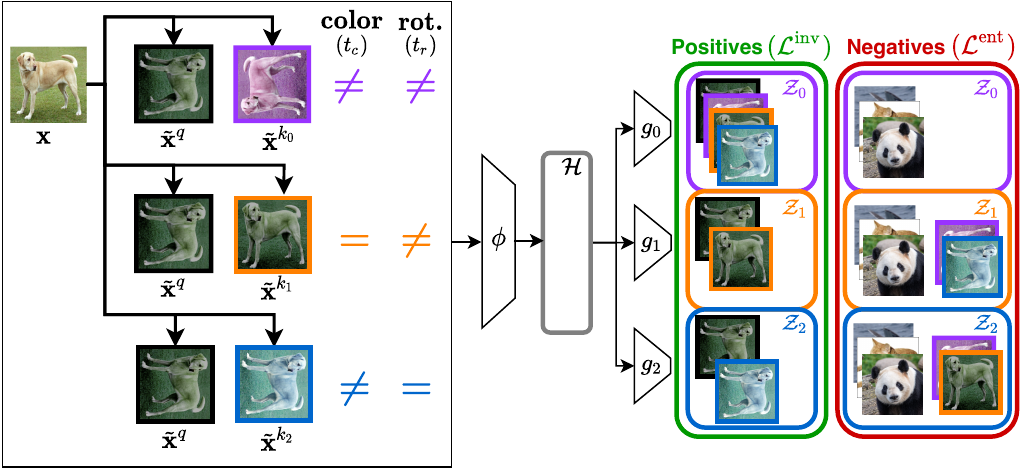}
        \caption{\citet{xiao2021what}}
        \label{fig:xiao_loo_comparison:xiao}
    \end{subfigure}\vspace{3mm}
    \begin{subfigure}[b]{\textwidth}
        \centering
        \includegraphics[width=\linewidth]{figs/fig1_v1_cropped.pdf}
        \caption{Ours}
        \label{fig:xiao_loo_comparison:ours}
    \end{subfigure}
    \caption{\textbf{Comparison with \citet{xiao2021what}.} Note the differences in data augmentation modules, as well as the embedding spaces in which positives and negatives are compared. In particular, note the number of different images in a given batch, with our framework containing more true negatives. See \citet[Sec.~3]{xiao2021what} for details on their query-key notation.
    }
    \label{fig:xiao_loo_comparison}
\end{figure}

\cref{fig:xiao_loo_comparison} shows how our construction of image views allows more negative samples in a given batch, compared to the query-key construction of \citet{xiao2021what}.

\clearpage
\section{Further Results}\label{app:further-results}
We now present additional results.

\subsection{ColorDSprites}\label{app:further-results:cdsprites}
\Cref{tab:aug-strengths:full} gives the full, per-factor results for SimCLR and VICReg when trying to predict ground-truth content ($\bc$) and style ($\bs$) values from the post-projector embedding ($\bz$). As an expanded version of \cref{tab:aug-strengths:main} in \Cref{sec:exps:cdsprites}, \Cref{tab:aug-strengths:full} shows how different augmentation strengths affect which style attributes are recovered in the post-projector embedding $\bz$. It also reinforces the message from \Cref{sec:exps:cdsprites}: adaptively increasing $\lambda_0$ ensures that we reliably remove all style from our content space.

\begin{table}[tb]
\centering
\caption{\textbf{ColorDSprites: Recovering only content despite varying augmentation strengths.} $r^2$ in predicting ground-truth factor values from a learned embedding $\bz$. Stronger augmentations lead to more discarding of style. Adapting/increasing $\lambda$ (from its standard value) reliably discards all style, regardless of the augmentation strengths.}\label{tab:aug-strengths:full}
\resizebox{\textwidth}{!}{
\begin{tabular}{@{}lllccccccc@{}}
\toprule
\multirow{2}{*}{\textbf{Algorithm}} & \multirow{2}{*}{$\lambda$} & \multirow{2}{*}{\textbf{Augm.}} & \textcolor{myorange}{\textbf{Content} ($\bc$)} & \multicolumn{6}{c}{\textcolor{mypurple}{\textbf{Style} ($\bs$)}} \\ \cmidrule(l){4-4} \cmidrule(l){5-10}   & & \textbf{Strength} & \textbf{Shape} & \textbf{Color} & \textbf{Scale} & \textbf{Orient.} & \textbf{PosX} & \textbf{PosY} & \textbf{Avg.} \\ 
\midrule
SimCLR & standard & weak                    &  1.0              &    0.93            &      0.89          &   0.30               &   0.82            &      0.83  & 0.75       \\
 & & medium        &   1.0             &       0.73         &      1.00          &    0.19              &     0.89          &      0.89 & 0.74         \\
 & & strong        &  1.0              &        0.31        &  1.00              &      0.05            &   0.23            &    0.30 & 0.38           \\ \cmidrule{2-10}
 & adapted & weak                    & 1.0               &        0.21        &      0.17          &      0.00            &    0.01           &      0.01 & 0.08         \\
 & & medium        &   1.0             &    0.10           &     0.16           & 0.00                 &   0.00            &    0.00  & 0.05          \\
 & & strong                  & 1.0               &     0.10           &      0.11          &  0.00                &   0.00            &   0.00 & 0.04          \\ \midrule
VICReg        & standard              & weak                    &   1.0             &       0.87         &      0.71          &     0.29       &   0.45     &    0.45 & 0.55           \\
              &                  & medium        &    1.0            &       0.40         &     1.00           &   0.05      &   0.56     &    0.56   &  0.51       \\
              &                  & strong                  &  1.0              &    0.12            &  0.99     &      0.08            &    0.62           &    0.62   & 0.49        \\ \cmidrule{2-10}
              & adapted           & weak                    &   1.0             &  0.20              &     0.17           &      0.00    &       0.00        &     0.00 & 0.07          \\
              &                   & medium        &   1.0             &      0.10          &   0.52             & 0.00                 &   0.00            &   0.01 & 0.13           \\
              &                   & strong                  &      1.0          &    0.10            &       0.53         &       0.00           &       0.00        &    0.00 & 0.13           \\ \bottomrule
\end{tabular}
}
\end{table}

\subsection{ImageNet}\label{app:further-results:imagenet}
\paragraph{Grouped results.} As detailed in \cref{sec:exps:imagenet}, we group the downstream tasks into those which rely on spatial, appearance and other (not spatial- or appearance-heavy) information. The results for these groupings are given \Cref{tab:imagenet-grouped} and depicted visually in \cref{fig:imagenet-bar-results}. As a reminder, the groupings are:
\begin{itemize}
    \item \textbf{Spatial}: $\text{CUB}_{\text{bounding-box}}$, $\text{CUB}_{\text{keypoints}}$.
    \item \textbf{Appearance}: DTD, Flowers, Pets.
    \item \textbf{Other}: Aircraft, Caltech101, Cars, CIFAR10, CIFAR100, $\text{CUB}_{\text{class}}$, SUN, VOC.
\end{itemize}

\begin{table}[tb]
\centering
\caption{\textbf{Grouped ImageNet results.} We report linear-probe performance on ImageNet and \textit{grouped} downstream tasks. All columns show top-1 accuracy~(\%) except "Spatial", which shows the $r^2$ coefficient of determination for downstream regression tasks. In the final column, the average is over tasks (shown in \cref{tab:imagenet-full}) rather than over groups (shown in this table).}\label{tab:imagenet-grouped}
\begin{tabular}{@{}lcccccc@{}}
\toprule
\multicolumn{1}{c}{\multirow{2}{*}{{\textbf{Algorithm}}}} & \multirow{2}{*}{{\textbf{Features}}} & {\textbf{Pretraining} ($\uparrow$)} & \multicolumn{4}{c}{{\textbf{Downstream tasks} ($\uparrow$)}}                                                                                                               \\
\multicolumn{1}{c}{}                                          &                                          & ImageNet Top-1             & Spatial & Appearance & Other  & Average \\ \midrule
SimCLR                                                        & z                                        & 56.5                       & 23.8                                                    & 67.8                                 & 42.4                            &     47.9                           \\
SimCLR + Ours-FT                                              & z                                        & \textbf{57.8}                       & 24.4                                                    & \textbf{68.3}                                 & 43.8                            &    48.8                            \\
SimCLR + Ours-Scratch                                                 & z                                        & 49.0                       & \textbf{39.1}                                                    & 62.8                                 & \textbf{45.3}                            &  \textbf{50.6}                              \\
VICReg                                                        & z                                        & 55.7                       & 19.2                                                    & 64.8                                 & 39.4                            &   45.0                             \\
VICReg + Ours-FT                                              & z                                        & 55.3                       & 22.0                                                    & 67.2                                 & 42.0                            & 47.3                               \\ \midrule
SimCLR                                                        & h                                        & \textbf{68.1}                       & 51.9                                                    & 83.7                                 & \textbf{65.9}                            &     68.9                           \\
SimCLR + Ours-FT                                              & h                                        & 67.9                       & \textbf{52.2}                                                    & \textbf{84.0}                                 & \textbf{65.9}                            &      \textbf{69.0}                          \\
SimCLR + Ours-Scratch                                                 & h                                        & 61.7                       & 47.1                                                    & 77.6                                 & 58.5                            &  62.6                              \\
VICReg                                                        & h                                        & 67.2                       & 50.5                                                    & 83.9                                 & 65.3                            &     68.4                           \\
VICReg + Ours-FT                                              & h                                        & 66.9                       & 51.3                                                    & \textbf{84.0}                                 & 65.1                            &    68.5                            \\ \bottomrule
\end{tabular}
\end{table}

\paragraph{Full/ungrouped results.} \Cref{tab:imagenet-full} gives the full ungrouped results, showing the performance for each individual downstream task.

\begin{table}[tb]
\centering
\caption{\small \textbf{Full ImageNet results.} We report linear-probe performance on ImageNet and a broad range of downstream tasks, showing top-1 accuracy (\%) for all but $\text{CUB}_{\text{bbox}}$ ($r^2$, i.e., the coefficient of determination), $\text{CUB}_{\text{kpt}}$ ($r^2$), and VOC ($\text{AP}_{\text{50}}$, i.e., mean average precision with an IoU threshold of 0.5). We use frozen embeddings $\bz$ (post-projector) and representations $\bh$ (pre-projector). Our method is used to fine-tune a base SimCLR/VICReg model (+ Ours-FT) or to train a model from scratch (+ Ours-Scratch). Ct101: CalTech101. Cf10: CIFAR10.}\label{tab:imagenet-full}
\resizebox{\textwidth}{!}{
\begin{tabular}{@{}lcc|ccccccccccccc|c@{}}
\toprule
\textbf{Algorithm} & \textbf{Feat.\ }\hspace{-2mm} & \textbf{ImgNt} & \textbf{Acft }& \textbf{Ct101} & \textbf{Cars} & \textbf{Cf10} & \textbf{Cf100}\hspace{-2mm} & $\bf{CUB_{bbox}}$\hspace{-2mm} & $\bf{CUB_{cls}}$\hspace{-2mm} & $\bf{CUB_{kpt}}$\hspace{-2mm} & \textbf{DTD} & \textbf{Flwrs} & \textbf{Pets} & \textbf{SUN} & \textbf{VOC} & \textbf{Avg.} \\ \midrule
SimCLR & $\bz$ & 56.5 & 14.6 & 70.9 & 13.0 & 76.7 & 50.5 & 35.6 & 22.5 & 12.0 & 66.4 & 66.8 & 70.3 & 48.9 & 74.6 & 47.9 \\
SimCLR + Ours-FT\hspace{-3mm} & $\bz$ & \textbf{57.8} & 15.9 & 72.4 & 14.6 & 77.8 & 53.6 & 36.2 & 22.5 & 12.6 & 67.0 & 67.3 & 70.6 & 49.5 & 74.9 & 48.8 \\
SimCLR + Ours-Scratch\hspace{-3mm} & $\bz$ & 49.0 & 25.9 & 77.6 & 14.4 & 81.8 & 56.2 & 60.5 & 15.1 & 17.6 & 64.4 & 63.4 & 60.7 & 45.9 & 73.6 & \textbf{50.6} \\
VICReg                          & $\bz$ & 55.7 & 10.6 & 69.5 & 9.5 & 75.0 & 48.3 & 27.6 & 17.1 & 10.8 & 64.6 & 61.3 & 68.4 & 46.1 & 75.6 & 45.0 \\
VICReg + Ours-FT\hspace{-3mm}   & $\bz$ & 55.3 & 11.7 & 72.6 & 11.1 & 78.5 & 54.4 & 32.5 & 17.8 & 11.4 & 66.5 & 66.8 & 68.3 & 48.0 & 75.5 & 47.3 \\ \midrule
SimCLR                          & $\bh$ & \textbf{68.1} & 50.9 & 88.2 & 50.7 & 89.3 & 73.0 & 71.3 & 48.6 & 32.5 & 75.2 & 93.5 & 82.4 & 60.3 & 79.7 & 68.9 \\
SimCLR + Ours-FT\hspace{-3mm}   & $\bh$ & 67.9 & 51.0 & 88.1 & 51.0 & 89.4 & 72.9 & 71.4 & 48.5 & 32.9 & 75.9 & 93.5 & 82.6 & 60.2 & 79.7 & \textbf{69.0} \\
SimCLR + Ours-Scratch\hspace{-3mm}           & $\bh$ & 61.7 & 46.2 & 86.5 & 37.5 & 87.2 & 67.4 & 70.6 & 29.6 & 23.6 & 72.6 & 86.8 & 73.3 & 55.1 & 77.2 & 62.6 \\
VICReg                          & $\bh$ & 67.2 & 51.1 & 87.6 & 52.6 & 88.3 & 70.1 & 69.1 & 47.4 & 31.9 & 75.3 & 93.4 & 83.1 & 59.7 & 79.6 & 68.4 \\
VICReg + Ours-FT\hspace{-3mm}   & $\bh$ & 66.9 & 50.1 & 87.5 & 52.4 & 88.4 & 70.3 & 69.8 & 47.2 & 32.8 & 75.7 & 93.6 & 82.7 & 59.8 & 79.6 & 68.5 \\
\bottomrule

\end{tabular}
}
\end{table}

\end{document}